\newcommand{\setcover}{{\sc SetCover}}
\newcommand{\balanceloadcover}{{\sc BalancedTA}}
\newcommand{\balanceloadrequiredcover}{{\sc R-BalancedTA}}
\newcommand{\load}{{\sc balanced task covering}}
\newcommand{\ExpertGreedy}{{\texttt{ExpertGreedy}}}
\newcommand{\ProjectTopExpertGreedy}{{\texttt{TaskGreedy}}}
\newcommand{\SetCoverGreedy}{{\texttt{SetCover}}}
\newcommand{\BestCostGreedy}{{\texttt{BestCostGreedy}}}
\newcommand{\PairGreedy}{{\texttt{PairGreedy}}}
\newcommand{\LoadGreedy}{{\texttt{Load}}}
\newcommand{\BestLoad}{{\texttt{BestLoad}}}
\newcommand{\TopTasks}{{\texttt{TopTasks}}}
\newcommand{\TopExperts}{{\texttt{TopExperts}}}
\newcommand{\UpdateTeams}{{\texttt{UpdateTeams}}}
\newcommand{\UpdateTasks}{{\texttt{UpdateTasks}}}
\newcommand{\UpdateExperts}{{\texttt{UpdateExperts}}}
\newcommand{\cardinality}{\ensuremath{N}}
\newcommand{\GuruDataset}{{\textit{Guru}}}
\newcommand{\FreelanceDataset}{{\textit{Freelancer}}}
\newcommand{\UpworkDataset}{{\textit{Upwork}}}
\providecommand{\definitionname}{Definition}
\providecommand{\lemmaname}{Lemma}
\providecommand{\theoremname}{Theorem}
\newtheorem{problem}{Problem}
\newtheorem{thm}{\protect\theoremname}[section]
\newtheorem{lem}[thm]{\protect\lemmaname}
 \newtheorem{observation}{Observation}
\newcommand{\spara}[1]{\smallbreak\noindent{\bf{#1}}}
\newcommand{\mpara}[1]{\medskip\noindent{\bf{#1}}}
\newcommand{\etal}{{et al.}}
\newcommand*{\belowrulesepcolor}[1]{%
	\noalign{%
		\kern-\belowrulesep
		\begingroup
		\color{#1}%
		\hrule height\belowrulesep
		\endgroup
	}%
}
\newcommand*{\aboverulesepcolor}[1]{%
	\noalign{%
		\begingroup
		\color{#1}%
		\hrule height\aboverulesep
		\endgroup
		\kern-\aboverulesep
	}%
}
\newcommand{\squishlist}{\begin{list}{$\bullet$}
  { \setlength{\itemsep}{0pt}
     \setlength{\parsep}{3pt}
     \setlength{\topsep}{3pt}
     \setlength{\partopsep}{0pt}
     \setlength{\leftmargin}{1.5em}
     \setlength{\labelwidth}{1em}
     \setlength{\labelsep}{0.5em} } }
\newcommand{\squishend}{
  \end{list}  }
\begin{document}

\title{Finding teams that balance expert load and task coverage \thanks{Department of Computer Science, Boston University. \{smnikol,marcocai,evimaria\}@bu.edu}}
\author{Sofia Maria Nikolakaki \and Mingxiang Cai \and Evimaria Terzi}
\date{}

\maketitle

\begin{abstract}
The rise of online labor markets (e.g., Freelancer, Guru and Upwork) has ignited a lot of research on team formation, where experts acquiring different skills form teams to complete tasks.
The core idea in this line of work has been the strict requirement that the team of experts assigned to complete a given task should contain a superset of the skills required by the task.
However, in many applications the required skills are often a wishlist of the entity that posts the task and not all of the skills are absolutely necessary.
Thus, in our setting we relax the complete coverage requirement and we allow for tasks to be partially covered by the formed teams, assuming that the quality of task completion is proportional to the fraction of covered skills per task. 
At the same time, we assume that when multiple tasks need to be performed, the less the load of an expert the better the performance. 
We combine these two high-level objectives into one and define the {\balanceloadcover} problem. 
We also consider a generalization of this problem where each task consists of \emph{required} and \emph{optional} skills. In this setting, our objective is the same under the constraint that all required skills should be covered.
From the technical point of view, we show that the {\balanceloadcover} problem (and its variant) is NP-hard and design efficient heuristics for solving it in practice. 
Using real datasets from three online market places, Freelancer, Guru and Upwork  we demonstrate the efficiency of our methods and the practical utility of our framework.
\end{abstract}

\section{Introduction}
\label{sec:intro}
Creating effective teams by combining experts with diverse skills allows organizations to successfully complete tasks from different domains, while also helping individual experts position themselves in highly competitive job markets.
The rise of online labor markets, like Freelancer, Guru and Upwork has motivated a significant amount of research on the problem of \emph{team formation}~\cite{anagnostopoulos2012online,bhowmik2014submodularity,kargar2011discovering,kargar2012efficient,lappas2009finding,majumder2012capacitated,rangapuram2013towards}.
Although many variants of this problem have been considered, they all rely on the same model of \emph{coverage} of tasks by experts. 
According to this model, both tasks and experts are characterized by a set of skills. 
Each task consists of a set of skills required for the task completion and each expert acquires a set of skills. 
A team consisting of one or more experts completes a task if the union of the skills of the team's experts covers all the required skills of the task.  In all existing work
\emph{complete coverage} of tasks is required by the formed teams.

However, there are cases where tasks are described generically and require teams to have strong background 
in many areas, whereas clearly not all these areas can (or need to) be simultaneously covered.
Oftentimes, the list of required skills is more like the wishful thinking of the entity that posts 
the task and not all of the skills need to be covered for its completion.
For example, think of a post that describes potential cluster hires for an academic institution.
Moreover, when looking at job posts in online labor markets, oftentimes the skills required by the posted tasks are repetitive. 
An example of a job post in \url{guru.com} is: \textit{basic, oracle, html, java, javascript, mysql, css, sql, http, ajax, mvc, architecture, jquery, software, software development, web development, developer, web developer}. 
Another example from \url{freelancer.com} is: \textit{Advertising, Facebook Marketing, Internet Marketing, Marketing, Social Networking}. 
In these examples the task descriptions could be repetitive; someone who is good at  Marketing is also good at Facebook and/or Internet marketing. 
Therefore, in such cases not all skills of a task need to be covered.

Motivated by these settings, we relax the requirement of completely covering a task and we allow for \emph{partial task coverage} by the formed teams.
However, we assume that the quality of task completion is proportional to the fraction of covered skills.

Another important factor for the quality of task completion is the load of the experts.
For example, an expert assigned to several tasks may be too busy to devote a lot of effort
in each one of them and consequently underperform~\cite{anagnostopoulos2010power}. 
Thus, our second objective is to avoid loading the experts with many tasks.

We combine these two objectives as follows: given a set of tasks, form one team
per task such that the tasks are partially covered, while at the same time, the maximum number of teams an expert participates in is kept as small as possible. 
Thus, our goal is to both \emph{fulfill the tasks as much as possible}
and \emph{not overload the experts} involved with the formed teams.

Formally,  our goal is the following: given a set of $k$ tasks $\mathcal{J}$ and a set of experts $\mathcal{P}$
form $k$ teams $\mathcal{Q}=\{Q_1,\ldots,Q_k\}$, one for each task such that
$
\lambda L(\mathcal{Q})+ C(\mathcal{Q},\mathcal{J})
$
is minimized.  
In this equation, $L(\mathcal{Q})$ denotes the maximum number of teams an expert participates
in and $C(\mathcal{Q},\mathcal{J})$ denotes the sum of the fraction of uncovered skills per task. Both these components are quantities that we 
aim to minimize, while $\lambda$ is a trade-off parameter that controls the importance of each objective.

We call the above problem the {\balanceloadcover} problem.
Note that the problem definition is such that combines the two, seemingly unrelated objectives, into a single objective, and  
asks from the algorithm to find the right balance between the two, according to the trade-off parameter $\lambda$, and without placing a hard constraint on one of the objectives.

For the {\balanceloadcover} problem we show that
there are values of $\lambda$ such that the {\balanceloadcover} problem is NP-hard
and  we design algorithms for solving it efficiently in practice.
The effectiveness and the efficiency of our algorithms is shown in our experimental 
evaluation with datasets from three major online labor markets.

We also note that in many applications, there may be tasks that have both \emph{absolutely required} and \emph{optional} skills.  
For instance, a task might require workers with expertise in \textit{Facebook Marketing and Advertising} and optionally broader knowledge in the areas of \textit{Internet Marketing, Marketing and Social Networking}.
Thus, we also consider a generalization of the {\balanceloadcover}, where there is a hard 
constraint on the coverage of the required skills for every 
task. 
Not only do we show that this variant of the problem is also NP-hard, but also that the
same algorithms we designed for {\balanceloadcover} can be used for this version
of the problem as well, with minimal changes.

Our contributions are summarized as follows:

\squishlist
\item We define the {\balanceloadcover} problem, which tries to find teams for a set of tasks such that both
the coverage of task requirements (in terms of skills) is maximized and the load of every individual worker (in terms of the number of teams she participates in) is minimized. We combine these two requirements in a single objective and use a trade-off parameter $\lambda$ to control the importance of each.
\item We study the computational complexity of the {\balanceloadcover} problem and show that while there are cases for 
which it is trivial, there are also cases for which it is NP-hard.
\item We design a set of heuristics for solving {\balanceloadcover} in practice.
\item We study a variant of the {\balanceloadcover} problem where the set of skills required for each task is split into required and optional.  
We show that this version is NP-hard and also demonstrate that our algorithms for {\balanceloadcover} can be applied to it with few modifications.
\item Finally, using three datasets from real online labor markets, we test the practical utility and the efficiency of our methods.
\squishend

\mpara{Roadmap:} Section~\ref{sec:related} reviews the related work.
We present our problem in Section~\ref{sec:prelim} and our algorithms in Section~\ref{sec:algo}. In Section~\ref{sec:exp} we evaluate the performance of our methodology using real datasets. We conclude the
paper in Section~\ref{sec:concl}.

\section{Related Work}
\label{sec:related}
Recent studies raise the importance of team formation in different settings \cite{li2018network,wang2015comparative}. 
To the best of our knowledge, we are the first to introduce the {\balanceloadcover} problem, where we 
simultaneously optimize for the coverage of the task requirements and the load of the experts. 
However, our work is related to existing work on team formation as described below.

\mpara{Team formation in network of experts:} 
Lappas {\etal} \cite{lappas2009finding} were the first to introduce the notion of team formation in the setting of a social network. Given a network of experts with skills, their goal is to find a team that collectively covers 
all the requirements of a single task, while establishing small communication cost (in terms of the network) between the team members. 
A series of subsequent works extended this work towards different directions~\cite{anagnostopoulos2012online,bhowmik2014submodularity,kargar2013finding,kargar2011discovering,kargar2012efficient,majumder2012capacitated,li2015team,li2015replacing,li2017enhancing,rangapuram2013towards,yin2018social} 
All the aforementioned works share two common assumptions: 
$(i)$ the experts are organized in a network that quantifies how well they can work together and $(ii)$ all the required
skills of the tasks need to be covered by the formed teams.
Our model does not assume the existence of a network among the experts 
and the tasks need not be fully covered.
Therefore, the computational problem that we are solving is different from the ones above.

\mpara{Team formation with load balancing:} 
Anagnostopoulos {\etal}~\cite{anagnostopoulos2010power}, 
were the first to consider minimizing the load of experts in the online setting where a stream of tasks arrives and
experts form teams in order to cover all the required skills for each task. The offline version of this problem
resembles the load-balancing requirement of our problem. However,  our work allows partial task coverage, while Anagnostopoulos {\etal} \cite{anagnostopoulos2010power} form teams that entirely cover the requirements of a task. Moreover, our framework also provides the flexibility of defining a desirable trade-off between the two costs and creates effective teams based on the importance of each.

\mpara{Multiple tasks coverage:} A key characteristic of our work is that we consider the offline setting where 
there are multiple tasks, known a-priori, and there is a team formed for each one of them. 
The offline versions of Anagnostopoulos {\etal}~\cite{anagnostopoulos2010power, anagnostopoulos2012online}, the work of Golshan {\etal}~\cite{golshan14profit}, as well as the recent work of Barnab\'o {\etal}~\cite{barnabo2019algorithms} consider multiple tasks and multiple teams. However,
contrary to our setting, all the above works require that all skills required by the sequence of tasks are completely covered. 

\mpara{Team formation with partial coverage:} 
Probably, the closest to our work is the work by Dorn and Dustdar~\cite{dorn2010composing}, which introduces a multi-objective team composition problem with two objectives: skill coverage and communication cost. Their goal is to identify the best balance between the two costs. For this purpose, they use a set of heuristics that self-adjust a trade-off parameter to decide team configurations. In our setting, we do not consider the communication cost, but the workload of the experts. Moreover, our algorithms focus on allocating experts to teams, based on a user-defined trade-off between load and coverage. Dorn and Dustar focus on \emph{finding} a ``best" trade-off between connectivity and coverage, where the notion of ``best" is defined in a rather adhoc manner.
Finally, although they touch upon the issue of partial coverage, they focus on data extraction rather than algorithm design.

\section{Preliminaries}
\label{sec:prelim}
This section provides the notation used throughout the paper and presents the formal definition and the
complexity analysis of the problem that we study. 

Throughout the discussion, we consider a set of $m$ skills $\mathcal{S}$, a set of $n$ experts $\mathcal{P} = \{P_{i}; i=1,\ldots,n\}$ and a set of $k$ tasks $\mathcal{J} = \{J_{j}; j=1,\ldots,k\}$. 
In this setting, every expert and every task is a subset of the skills, i.e., $P_{i}\subseteq \mathcal{S}$ and $J_{j}\subseteq \mathcal{S}$, respectively. 
To complete a task we need to assign a team of experts to it.
We let $Q_j\subseteq \mathcal{P}$ denote the team assigned to the $j$th task. 
For $k$ tasks, we form $k$ teams $\mathcal{Q} = \{Q_1,\ldots,Q_k\}$. 
We call $\mathcal{Q}$ the \emph{team assignment} for tasks $\mathcal{J}$.
For each team $Q_j$ we compute its \emph{skill profile} $\textsc{Cov}(Q_j)$ representing the union of the skills of its members.
That is, $\textsc{Cov}(Q_j) = \cup_{i\in Q_j}P_i$.
 
\mpara{Load cost ($L$)}: An important quantity is the \textit{load} of a person, which is the number of tasks a person is assigned to. That is, for person $P$
we have that the load of $P$ is $L(P,\mathcal{Q})$ = $\left|\{j:P\in Q_{j}\}\right|$. 
We are interested in the \emph{maximum load among all experts}, i.e., 
\begin{equation*}
	\begin{aligned}
	& L(\mathcal{Q})=\max\limits_{P\in \mathcal{P}} L(P,\mathcal{Q}) \\
	\end{aligned}
	\label{eq:phs_orig}
\end{equation*}

\mpara{Incompleteness cost ($C$)}: 
Given a task $J$ and a team $Q\subseteq \mathcal{P}$ assigned to it we define the \emph{incompleteness cost} of $Q$ with respect to $J$ to be the fraction of the required skills, which are \emph{not covered} by the team's skill profile.
That is, 

\begin{equation*}
	\begin{aligned}
	& F(Q,J) = \frac{\left|J\setminus \textsc{Cov}(Q)\right|}{|J|}
	\end{aligned}
    \label{eq:prob}
\end{equation*}

Intuitively, our goal is to minimize the incompleteness cost since we want the assigned team to cover as many of the skills required by the corresponding task as possible.
Thus, we define the total \emph{incompleteness cost} of a team assignment $\mathcal{Q}$ to be:
\begin{equation*}
	\begin{aligned}
	& C(\mathcal{Q},\mathcal{J}) = \sum_{j\in \mathcal{J}} F(Q_{j},J_{j})
	\end{aligned}
    \label{eq:prob1}
\end{equation*}

\mpara{Team-assignment cost ($B$)}: 
Given a trade-off parameter $\lambda$, the cost of a team assignment 
$\mathcal{Q}$ for a set of tasks $\mathcal{J}$ is denoted by
$B(\mathcal{Q},\mathcal{J},\lambda)$ and it is a linear combination of the maximum workload and the incompleteness cost as defined above.
That is,
\begin{equation*}
	\begin{aligned}
	& B(\mathcal{Q},\mathcal{J},\lambda) = \lambda L(\mathcal{Q}) + C(\mathcal{Q},\mathcal{J})
	\end{aligned}
	\label{eq:prob2}
\end{equation*}
The trade-off parameter $\lambda$ provides an easy way to control the relative importance of each objective, where $\lambda = 0$ ignores the workload, and conversely $\lambda > k$, where $k$ is the number of tasks, ignores the incompleteness cost. In our experiments, we
considered different values of $\lambda\in\mathbb{R}^+$ and we discuss our findings.

\subsection{The {\balanceloadcover} problem}
We can now define the main problem addressed in this paper:
\begin{problem}[{\balanceloadcover}]
	\label{prob:blc}
	Given a set of $k$ tasks $\mathcal{J}=\{J_1,\ldots, J_k\}$, a set of $n$ experts $\mathcal{P}=\{P_1,\ldots,P_n\}$ and a real non-negative  
	value $\lambda\in \mathbb{R}^+$, find a team assignment
$\mathcal{Q}=\{Q_1,\ldots , Q_k\}$ consisting of
$k$ teams, such that team $Q_i$ is associated with task $J_i$ and
$B(\mathcal{Q},\mathcal{J},\lambda)$ is minimized. 
\end{problem}
The solution of the case where $\lambda=0$ is trivial: when $\lambda=0$ then the optimal solution assigns all workers to all tasks, leaving this way the minimum number of non-covered skills. 

On the other hand, when $\lambda > k$, where $k$ is the number of tasks, we prove that only the workload matters and thus the trivial solution of not assigning experts to tasks is the best strategy.  This is summarized in the following lemma. 
\begin{lem}
\label{lemma:load}
For a set of $k$ tasks and $\lambda > k$, the optimal solution of the {\balanceloadcover} problem is the one that leaves all tasks completely uncovered; i.e., $\mathcal{Q}=\emptyset$.
\end{lem}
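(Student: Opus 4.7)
The plan is to directly compare the objective value of the empty assignment $\mathcal{Q} = \emptyset$ (meaning every $Q_j$ is empty) against any non-empty assignment, and show the former is strictly smaller whenever $\lambda > k$. Since this is essentially an arithmetic comparison, I expect the proof to be short; the only subtlety is being careful about how the maximum load and the incompleteness cost behave at their extremes.

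First, I would evaluate $B$ on the empty assignment. For every $j$, $\textsc{Cov}(\emptyset) = \emptyset$, so $F(\emptyset, J_j) = |J_j|/|J_j| = 1$, giving $C(\emptyset,\mathcal{J}) = k$. Also, no expert is assigned to any task, so $L(\emptyset) = 0$. Hence $B(\emptyset,\mathcal{J},\lambda) = \lambda \cdot 0 + k = k$.

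Next, I would lower-bound $B$ on any assignment $\mathcal{Q}$ that contains at least one non-empty team. The incompleteness cost is trivially nonnegative, $C(\mathcal{Q},\mathcal{J}) \geq 0$. If some team $Q_j$ is non-empty, then at least one expert is assigned to at least one task, which forces $L(\mathcal{Q}) \geq 1$. Combining these,
\begin{equation*}
B(\mathcal{Q},\mathcal{J},\lambda) \;=\; \lambda L(\mathcal{Q}) + C(\mathcal{Q},\mathcal{J}) \;\geq\; \lambda \cdot 1 + 0 \;=\; \lambda \;>\; k \;=\; B(\emptyset,\mathcal{J},\lambda),
\end{equation*}
where the strict inequality uses the hypothesis $\lambda > k$.

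Therefore no assignment with a non-empty team can beat the empty assignment, and $\mathcal{Q}=\emptyset$ is optimal. The main (and only) obstacle is just sanity-checking the corner case of the incompleteness cost at the empty team, which I handled above by noting $F(\emptyset, J_j) = 1$ and so $C$ saturates at exactly $k$; this is precisely the threshold that makes the condition $\lambda > k$ tight.
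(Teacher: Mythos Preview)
Your proof is correct and follows essentially the same approach as the paper's: both arguments compute that the empty assignment has cost exactly $k$ (since $L=0$ and $C=k$), observe that any non-empty assignment forces $L\geq 1$ and $C\geq 0$ so its cost is at least $\lambda > k$, and conclude. The paper phrases this as a proof by contradiction while you do a direct comparison, but the substance is identical.
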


\begin{proof}
Assume for the sake of contradiction an optimal solution $\mathcal{Q}^{*}\neq \emptyset$ with corresponding workload $L(\mathcal{Q}^{*}) \geq 1$.
By the definition of incompleteness we know that $0 \leq C(\mathcal{Q}^{*}) \leq k$ and therefore $B(\mathcal{Q}^{*},\mathcal{J},\lambda) > k$, for $\lambda > k$.
However, we see that there exists a solution $\mathcal{Q}^{'}$ with corresponding workload $L(\mathcal{Q}^{'})=0$ whose team-assignment cost is exactly equal to $k$, i.e., $B(\mathcal{Q}^{'},\mathcal{J},\lambda) = k$.
By the definition of load this solution can only be $\mathcal{Q}^{'}=\emptyset$ which contradicts the initial assumption.
\end{proof}

Therefore, we consider problem instances where $\lambda$ takes values in the
range $(0,k]$, where $k$ is the number of tasks.  For a subset of these values of $\lambda$
we can prove that the {\balanceloadcover} problem is NP-hard. More specifically, we have the following complexity result.

\begin{thm}
\label{thm:np}
For a set of $k$ tasks the {\balanceloadcover} problem is NP-hard for $0 < \lambda < \frac{1}{k\cardinality}$, with $N$ being the cardinality of the largest task.
\end{thm}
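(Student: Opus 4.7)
I would prove NP-hardness by (i) showing that for such small $\lambda$ the incompleteness cost strictly dominates the weighted load in the objective, so {\balanceloadcover} collapses to a lexicographic optimization of $(C,L)$; and (ii) reducing a known NP-hard problem such as \setcover{} to the residual load-minimizing problem.

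\emph{Step (i): coverage dominance.} For any team $Q_j$ assigned to task $J_j$ of size at most $N$, the quantity $F(Q_j,J_j)=|J_j\setminus\textsc{Cov}(Q_j)|/|J_j|$ is either $0$ or at least $1/|J_j|\ge 1/N$. Hence $C(\mathcal{Q},\mathcal{J})=\sum_j F(Q_j,J_j)$ is either exactly $0$ or at least $1/N$. On the other hand, since $L(\mathcal{Q})\le k$ and $\lambda<1/(kN)$, we have $\lambda L(\mathcal{Q})\le \lambda k<1/N$. Therefore every assignment with $C=0$ has strictly smaller objective than every assignment with $C>0$, so whenever the experts collectively cover all task skills, any optimum must first achieve $C=0$ and then, among such solutions, minimize $L$.

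\emph{Step (ii): embedding \setcover.} I plan to reduce from \setcover: given a universe $U$ with $|U|=N$, a family $\mathcal{F}$, and an integer $b$, decide whether $U$ can be covered by at most $b$ sets. Build a {\balanceloadcover} instance with skills $\mathcal{S}=U$, one expert $P_S$ per set $S\in\mathcal{F}$ with skill set $S$, and a collection of $k$ tasks consisting of copies of $U$ together, if needed, with small auxiliary tasks designed to calibrate the loads; set $\lambda$ to any rational in $(0,1/(kN))$. By Step (i) the optimum equals $\lambda L^\ast$, where $L^\ast$ is the minimum max load subject to every $Q_j$ fully covering $J_j$. The reduction will tie $L^\ast$ to the optimal cover size $\mathrm{OPT}$ via the counting lower bound $L^\ast \ge \lceil k\cdot\mathrm{OPT}/|\mathcal{F}|\rceil$ (each task requires at least $\mathrm{OPT}$ experts, so the total expert-task incidence is at least $k\cdot\mathrm{OPT}$) together with a matching construction that actually attains the bound, so that a polynomial-time algorithm for {\balanceloadcover} at this $\lambda$ yields a polynomial-time decision for $\mathrm{OPT}\le b$.

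\emph{Main obstacle.} The technically challenging part is the second step: designing the (possibly dummy) experts and auxiliary tasks so that $L^\ast$ tightly encodes $\mathrm{OPT}$ rather than only bounding it loosely. The difficulty arises because a single minimum cover can be reused across all $k$ tasks (inflating the loads of its experts to $k$), whereas interleaving several distinct covers can lower the max load at the price of using larger or overlapping covers; pinning down the correspondence between min-load and min-cover therefore requires carefully anchoring certain expert loads with gadget tasks that can only be served by specific experts. Once this tight correspondence is established, the reduction is polynomial, correctness follows from Step (i), and we conclude that {\balanceloadcover} is NP-hard for every $\lambda\in(0,1/(kN))$.
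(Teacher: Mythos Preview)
Your Step~(i) is correct and is exactly the core of the paper's argument: for $\lambda\in(0,1/(kN))$ the smallest nonzero incompleteness is at least $1/N$, while $\lambda L(\mathcal{Q})\le \lambda k<1/N$, so any optimal assignment must have $C(\mathcal{Q},\mathcal{J})=0$ and, subject to that, minimum $L$. Up to here you and the paper coincide.

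Where you diverge is Step~(ii). Once full coverage is forced, the residual problem---find a team for each task that covers all its skills while minimizing the maximum expert load---is \emph{literally} the {\load} problem of Anagnostopoulos {\etal}~\cite{anagnostopoulos2010power}, which the paper already has available as NP-hard. The paper therefore reduces directly from {\load}: take the same experts and tasks, pick any $\lambda\in(0,1/(kN))$, and observe that optimal solutions coincide. No gadget construction is needed.

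Your plan to re-derive this hardness from {\setcover} is both unnecessary and, as you yourself flag, incomplete. The obstacle you describe is real: with $k$ identical copies of the universe $U$ as tasks, the minimum max load $L^\ast$ is governed not by the size of a single optimal cover but by how many \emph{disjoint} or near-disjoint covers the family $\mathcal{F}$ admits, so $L^\ast$ does not tightly encode $\mathrm{OPT}$. Your counting bound $L^\ast\ge \lceil k\cdot\mathrm{OPT}/|\mathcal{F}|\rceil$ is correct but generally loose, and the ``anchoring'' gadgets you allude to are not specified; nothing in the proposal explains how to force $L^\ast$ to reveal $\mathrm{OPT}$ rather than some cover-packing quantity. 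The fix is simply to drop Step~(ii) in its current form and invoke the known NP-hardness of {\load} directly, as the paper does.
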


\begin{proof}
For the rest of the proof, we will refer to the {\balanceloadcover} problem for the case $0 < \lambda < \frac{1}{k\cardinality}$.
We reduce an instance of the NP-hard {\load} problem~\cite{anagnostopoulos2010power} to the {\balanceloadcover} problem. 
A reduction from {\load} to {\balanceloadcover} exists, if and only if a solution instance of {\balanceloadcover} for $0 < \lambda < \frac{1}{k\cardinality}$ is also a solution to {\load}.

An instance of {\load} consists of a pool of experts and tasks $\mathcal{P}, \mathcal{J}$, respectively, and asks for a set of teams $\mathcal{Z}$, one team for each task such that the maximum workload of a worker is minimized  and all tasks in $\mathcal{J}$ are completely covered. 
We transform an instance of {\load} to an instance of {\balanceloadcover}, by setting $\mathcal{P}$ and $\mathcal{J}$ to be the experts and the tasks, respectively, of the {\balanceloadcover} problem. 
We now claim that for $0 < \lambda < \frac{1}{k\cardinality}$, $\mathcal{Q}$ is a solution to the {\balanceloadcover} problem if and only if it is also a solution to the {\load} problem.

To see this consider the following: If $\mathcal{Q}$ is the solution to the
{\load} problem with load $L(\mathcal{Q})$, then $\mathcal{Q}$ is also a solution for {\balanceloadcover} with load $L(\mathcal{Q})$ and incompleteness $C(\mathcal{Q})=0$; this is because
$\mathcal{Q}$ covers all skill requirements in the {\load} problem. 

Conversely, let $\mathcal{Q}$ be a solution of 
{\balanceloadcover} for $0 < \lambda < \frac{1}{k\cardinality}$. We will show that $\mathcal{Q}$ is also a solution for the {\load} problem by claiming that for any $0 < \lambda < \frac{1}{k\cardinality}$ the solution of {\balanceloadcover} always yields $C(\mathcal{Q})=0$.
In order to ensure that $C(\mathcal{Q})=0$ (all task skills are covered), any possible team assignment $\mathcal{Q}'$ should lead to $\lambda L(\mathcal{Q}') <  C(\mathcal{Q}')$. 
Intuitively, this means that adding more workload to the experts is always preferred, than leaving any of the task requirement skills unsatisfied.
This is always true if the cost of the largest possible workload, which is assigning one or more experts to all tasks ($L_{max}=k$), multiplied by our trade-off parameter $\lambda$, is less than the smallest possible incompleteness cost, which is leaving one skill of the task with the largest cardinality uncovered.
This is true for  $\lambda< \frac{1}{k\cardinality}$.
\end{proof}

\subsection{Variant of the {\balanceloadcover} problem}
A natural variant of {\balanceloadcover} is one where some skills of a task are required while others are not.
In this variant, each task $J_i$ has a set of required skills $J_i^r$ and a set of optional skills $J_i^o$, such that $J_i=J_i^r\cup J_i^o$ and 
$J_i^r\cap J_i^o=\emptyset$.  
The required skills have to be covered while the optional skills behave as before. 
This problem variant is formally defined as follows:

\begin{problem}[{\balanceloadrequiredcover}]
	\label{prob:blrsc}
	Given a set of $k$ tasks $\mathcal{J}=\{J_1,\ldots, J_k\}$, with 
$\mathcal{J}^r = \{J_1^r,\ldots ,J_k^r\}$ and $\mathcal{J}^o=\{J_1^o,\ldots , J_k^o\}$, 
a set of $n$ experts $\mathcal{P}=\{P_1,\ldots,P_n\}$ and a real non-negative value $\lambda\in \mathbb{R}^+$, find $k$ teams $\mathcal{Q}=\{Q_1,\ldots, Q_k\}$, such that $B(\mathcal{Q},\mathcal{J}^o,\lambda)$ is minimized and
$C(\mathcal{Q},\mathcal{J}^r)=0$.
\end{problem}
From the complexity viewpoint we have the following result.
\begin{thm}
\label{thm:blrc}
The {\balanceloadrequiredcover} problem is NP-hard.
\end{thm}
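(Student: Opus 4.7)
The plan is to obtain NP-hardness of {\balanceloadrequiredcover} by a direct, polynomial-time reduction from {\balanceloadcover}, whose NP-hardness has just been established in Theorem~\ref{thm:np}. The key structural observation is that {\balanceloadcover} is essentially the special case of {\balanceloadrequiredcover} in which no skill is marked as required, so the reduction amounts to casting every {\balanceloadcover} instance as a degenerate {\balanceloadrequiredcover} instance and verifying that the objective values match on every team assignment.

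Given an instance of {\balanceloadcover} with experts $\mathcal{P}$, tasks $\mathcal{J}=\{J_1,\ldots,J_k\}$, and trade-off parameter $\lambda\in(0,\tfrac{1}{k\cardinality})$ (the range in which Theorem~\ref{thm:np} applies), I would construct an instance of {\balanceloadrequiredcover} on the same experts $\mathcal{P}$ and the same $\lambda$ by setting, for every task $J_i$, $J_i^r=\emptyset$ and $J_i^o=J_i$. The required/optional partition conditions $J_i^r\cup J_i^o=J_i$ and $J_i^r\cap J_i^o=\emptyset$ are trivially met, and the transformation is computable in linear time in the size of the input.

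Next I would verify that the two problems are equivalent on this instance. Because each $J_i^r$ is empty, $C(\mathcal{Q},\mathcal{J}^r)=0$ holds vacuously for every team assignment $\mathcal{Q}$, so every $\mathcal{Q}$ is feasible for {\balanceloadrequiredcover}. Furthermore, for any $\mathcal{Q}$,
\begin{equation*}
B(\mathcal{Q},\mathcal{J}^o,\lambda) \;=\; \lambda L(\mathcal{Q}) + C(\mathcal{Q},\mathcal{J}^o) \;=\; \lambda L(\mathcal{Q}) + C(\mathcal{Q},\mathcal{J}) \;=\; B(\mathcal{Q},\mathcal{J},\lambda),
\end{equation*}
so the two objectives coincide pointwise. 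Hence an optimal solution of the constructed {\balanceloadrequiredcover} instance is an optimal solution of the original {\balanceloadcover} instance, and conversely. A polynomial-time algorithm for {\balanceloadrequiredcover} would therefore yield one for {\balanceloadcover}, contradicting Theorem~\ref{thm:np}.

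There is no real technical obstacle here; the argument is a textbook inclusion of one problem into a strictly more general one. The only mildly subtle point is to confirm that the reduction is valid in the specific $\lambda$-regime for which Theorem~\ref{thm:np} guarantees hardness, since {\balanceloadcover} is trivial for $\lambda=0$ and for $\lambda>k$ (Lemma~\ref{lemma:load}); this is why I take $\lambda\in(0,\tfrac{1}{k\cardinality})$ both in the source and in the target instance. An alternative route would be to reduce directly from {\load} by making every skill required (setting $J_i^r=J_i$ and $J_i^o=\emptyset$), which likewise yields hardness with essentially the same verification; I would note this alternative briefly but prefer the reduction from {\balanceloadcover} since it reuses the preceding theorem without reaching back to \cite{anagnostopoulos2010power}.
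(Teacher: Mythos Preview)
Your reduction is correct, but the paper proceeds via the \emph{other} special case you yourself mention as an alternative: it sets every skill to be required ($J_i^r=J_i$, $J_i^o=\emptyset$) and reduces directly from the {\load} problem of~\cite{anagnostopoulos2010power}. With that choice the constraint $C(\mathcal{Q},\mathcal{J}^r)=0$ forces full coverage of every task, and the objective collapses to $B(\mathcal{Q},\mathcal{J}^o,\lambda)=\lambda L(\mathcal{Q})$, a positive scaling of the maximum load; optimizing {\balanceloadrequiredcover} on such an instance is then exactly {\load}.

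The trade-offs between the two routes are minor. Your argument reuses Theorem~\ref{thm:np} and stays self-contained within the paper, but it inherits the restriction to $\lambda\in(0,\tfrac{1}{k\cardinality})$; the paper's route is a one-line reduction that yields hardness for every $\lambda>0$. One small wrinkle in your version is that $F(Q,\emptyset)$ is formally $0/0$ under the paper's definition of incompleteness, so asserting that $C(\mathcal{Q},\mathcal{J}^r)=0$ holds ``vacuously'' tacitly adopts the convention that an empty required-skill set has zero incompleteness; the paper's route faces the symmetric issue on the $\mathcal{J}^o$ side, so this is not a defect peculiar to your approach.
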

This is because this problem is NP-hard even for the case where all skills of all tasks are required by a reduction from the {\load} problem~\cite{anagnostopoulos2010power}.

\section{Algorithms}
\label{sec:algo}
In this section, we describe the algorithms we designed for solving the {\balanceloadcover} problem. 

\mpara  {The {\ExpertGreedy} algorithm:} 
{\ExpertGreedy} finds $\ell$ solutions (team assignments), each of 
which with a different maximum workload $\ell=1,\ldots,k=|\mathcal{J}|$ and at the end reports 
the solution with the best score. 
In order to do so, for each $\ell$ it finds for each expert $P_{i}$ the $\ell$ tasks with the 
least uncovered skills when assigning $P_{i}$ to these tasks, and it assigns $P_{i}$ to teams that correspond
to those tasks.
The algorithm reports the solution of the $\ell$ value that resulted in the smallest $B(\mathcal{Q},\mathcal{J},\lambda)$ team-assignment cost. 

The pseudocode of {\ExpertGreedy} is shown in Algorithm \ref{algo:exg}. 
We draw attention to line 5 of this pseudocode. 
Routine {\TopTasks} retrieves the indexes of the $\ell$ tasks with the 
smallest fraction of uncovered skills when expert $P_{i}$ is assigned to them.
To find these tasks we use a binary min-heap to preserve the incompleteness cost of all tasks in sorted order.
Furthermore, lines 6 and 7 perform update operations.
In particular, routine {\UpdateTeams} (line 6) assigns expert $P_{i}$ to the selected $\ell$ tasks, while 
routine {\UpdateTasks} (line 7) removes skills from the selected tasks that are covered by expert $P_{i}$ chosen
at any given round $i$.

\begin{algorithm}[t]
	\begin{flushleft}
	\textbf{Input:} Tasks\;$\mathcal{J}$=$\{J_1,\ldots,J_k\}$, Experts\;$\mathcal{P}$=$\{P_1,\ldots,P_n\}$, $\lambda$ \\
	\textbf{Output:}Teams\;$\mathcal{Q}$=$\{Q_{1},\ldots,Q_{k}\}$
	\end{flushleft}
	\begin{algorithmic}[1]
        \STATE $score \gets \infty$
        \FOR{$\ell\in \{0,\ldots,|\mathcal{J}|\}$}
        \STATE $\mathcal{J'} \gets \mathcal{J}, \mathcal{Q'} \gets \emptyset$
        	\FOR{$P_{i}\in \mathcal{P}$}
                \STATE $L_{i} \gets$ {\TopTasks}($P_{i},\mathcal{J'},\ell$) 
		  \STATE $\mathcal{Q'} \gets$ {\UpdateTeams}($P_{i},L_{i},\mathcal{Q'}$) 
		  \STATE $\mathcal{J'} \gets$ {\UpdateTasks}($P_{i},L_{i},\mathcal{J'}$) 
            \ENDFOR
            \IF{$score > B(\mathcal{Q'},\mathcal{J},\lambda)$}
            	\STATE $score \gets B(\mathcal{Q'},\mathcal{J},\lambda)$
                \STATE $\mathcal{Q} \gets \mathcal{Q'}$
            \ENDIF
        \ENDFOR
    \RETURN {$\mathcal{Q},score$}
      \end{algorithmic}
	\caption{\label{algo:exg} The {\ExpertGreedy} algorithm.}
\end{algorithm}

A natural property of {\ExpertGreedy} is that it essentially assigns the same amount of workload to every expert. 
Note, that when deciding which teams to select for an expert, for a specific $\ell$, 
the algorithm does not take into account the first part of the objective function, i.e. $\lambda L(\mathcal{Q})$, since it is equal to $\lambda\ell$ for all experts.

The runtime complexity of {\ExpertGreedy} is O($k^{2}n\log{k} + k^{2}nm$). 
For each maximum load and for each expert, the algorithm sorts the tasks in ascending order based on the number of skills not covered.

\mpara {The {\ProjectTopExpertGreedy} algorithm:} 
This algorithm also finds $\ell$ solutions (team assignments), 
each with a different maximum workload $\ell=1,\ldots,k=|\mathcal{J}|$ and then selects the solution with the smallest cost. 
However, it differs from the previous algorithm: while {\ExpertGreedy} greedily assigns tasks to experts,  
{\ProjectTopExpertGreedy} finds a set of ``good'' candidate experts for a specific task.
In particular, for each $\ell$, the algorithm computes for each task $J_{j}$ the cost of the objective value when expert $P_{i}$ is assigned to team $Q_{j}$, for all $i=1,\ldots,n$. 
The algorithm keeps these costs in a binary min-heap data structure, for running time efficiency.
After computing the costs of all experts, it removes the root of the heap and assigns the corresponding expert to team $Q_{j}$, only if her skillset overlaps with the uncovered skills of task $J_{j}$. 
If the expert is assigned to the team, then all covered skills of $J_{j}$ are removed.
This process continues until, either all skills of $J_{j}$ are covered, or the remaining skills do not overlap with any of the unassigned experts.
After creating $Q_{j}$, the algorithm checks if there are any experts whose loads are equal to $\ell$, and removes those experts from the pool. 
At the end of each loop of $\ell$, there is a team associated with every task, 
and the cost $B(\mathcal{Q},\mathcal{J},\lambda)$ is computed. 
The algorithm reports the solution with the lowest team-assignment cost. 

The pseudocode of {\ProjectTopExpertGreedy} is presented in Algorithm \ref{algo:texg}.
Routine {\TopExperts} (line 5) computes and returns the indexes of those experts whose skillsets cover the requirements of the given task and that have the smallest objective value. 
Routines {\UpdateTeams} (line 6) and {\UpdateExperts} (line 7) perform update operations, i.e., assign the selected experts to the team of the current task, and remove from the pool of experts those with load cost equal to $\ell$, respectively.

\begin{algorithm}[t]
	\begin{flushleft}
	\textbf{Input:} Tasks\;$\mathcal{J}$=$\{J_1,\ldots,J_k\}$, Experts\;$\mathcal{P}$=$\{P_1,\ldots,P_n\}$, $\lambda$ \\
	\textbf{Output:}Teams\;$\mathcal{Q}$=$\{Q_{1},\ldots,Q_{k}\}$
	\end{flushleft}
	\begin{algorithmic}[1]
        \STATE $score \gets \infty$
        \FOR{$\ell\in \{0,\ldots,|\mathcal{J}|\}$}
        \STATE $\mathcal{P'} \gets \mathcal{P}, \mathcal{Q'} \gets \emptyset$
        	\FOR{$J_{j}\in J$}
		\STATE $L_{i} \gets$ {\TopExperts}($J_{j},\mathcal{P'},\ell$) 
		\STATE $\mathcal{Q'} \gets$ {\UpdateTeams}($J_{j},L_{i},\mathcal{Q'}$) 
		\STATE $\mathcal{P'} \gets$ {\UpdateExperts}($\mathcal{P'},\mathcal{Q'}$)
            \ENDFOR
            \IF{$score > B(\mathcal{Q'},\mathcal{J},\lambda)$}
            	\STATE $score \gets B(\mathcal{Q'},\mathcal{J},\lambda)$
                \STATE $\mathcal{Q} \gets \mathcal{Q'}$
            \ENDIF
        \ENDFOR
    \RETURN {$\mathcal{Q},score$}
    \end{algorithmic}
	\caption{The {\ProjectTopExpertGreedy} algorithm. \label{algo:texg} }
\end{algorithm} 

In contrast to {\ExpertGreedy}, {\ProjectTopExpertGreedy} does not assign the same amount of workload to every expert. 
In fact, some experts might not be assigned to any team at all; this is the case when there are other experts whose skillsets overlap more with the tasks.

Another difference between {\ExpertGreedy} and {\ProjectTopExpertGreedy} is their running time. 
In particular, the running time of {\ProjectTopExpertGreedy} is O($k^{3}n^{2} + k^{3}nm + k^{2}n\log{n} + k^{2}m^{2}$).
For each $\ell$ value and for each task, the algorithm sorts the experts in ascending order, based on the cost obtained after considering each of them separately, and then traverses them in the same order to allocate a team, based on the experts' overlap with the task.
We improve the running time, by observing that the objective value computation when considering an expert for a specific task, does not require finding the total incompleteness cost of all tasks, but only how much the specific task is covered by the expert since the incompleteness cost in the other tasks remains constant for all experts that are being evaluated for that task.
Then, the runtime complexity becomes O($k^{3}n^{2} + k^{2}nm + k^{2}n\log{n} + k^{2}m^{2}$).
Finally, keeping a variable that stores the overall maximum load during an $\ell$ loop decreases the runtime complexity to O($k^{2}nm + k^{2}n\log{n} + k^{2}m^{2}$).

\mpara {The {\BestLoad} algorithm:} 
The {\BestLoad} algorithm is a natural extension of the {\LoadGreedy} 
algorithm proposed by Anagnostopoulos {\etal}~\cite{anagnostopoulos2010power} for the offline setting of the {\load} problem. Recall that in that problem the goal is given a set of tasks, find an assignment of teams to tasks so as to minimize the maximum load of the workers subject to the constraint that all skills of all tasks are covered.

The {\LoadGreedy} algorithm has two steps.
The first step solves optimally the linear programming relaxation of the ILP formulation 
of the above problem (see Theorem 2~\cite{anagnostopoulos2010power}).
This creates a fractional solution $\hat{X}$.
The second step of {\LoadGreedy} performs $R$ rounds, with $R=(\ln \frac{2T}{\delta})$, where $T=\textrm{max}\{mk,n\}$, where $m$ is the number of skills, $k$ is the number of tasks and $n$ is the number of experts.
The algorithm assigns an expert $P_j$ to the task $J_i$ with probability $\hat{X}_{ji}$, independently of other rounds and of other assignments within the same round.
If expert $P_j$ was assigned to task $J_i$ in at least one round, the algorithm adds the expert to the team $Q_i$.
The authors show that $R$ rounds are required to achieve complete coverage of the skills acquired by the tasks.

The {\BestLoad} algorithm we propose has the same first step as {\LoadGreedy}.
To take into account the weighing trade-off parameter $\lambda$ our {\BestLoad} 
modifies the second step.
In particular, notice that as the number of rounds increases, more experts are assigned to tasks, i.e., the load increases and the coverage decreases.
Therefore, for larger values of $\lambda$ (load becomes more important than coverage) running fewer rounds leads to a better solution.
Conversely, for smaller values of $\lambda$ (coverage becomes more important than load) the algorithm needs to run a number of rounds closer to $R$.
Based on this observation, {\BestLoad} accommodates for the different values of $\lambda$ 
by creating $R$ solutions; one after each assignment round.
Then, given a specific value of $\lambda$, it returns the solution that has the corresponding smallest cost.

The runtime complexity of the first step of {\BestLoad} depends on the method used to solve the LP relaxation of the algorithm. 
State-of-the-art LP solvers require running time polynomial in the number of constraints of the problem~\cite{gondzio1994computational,spielman2004smoothed}. 
For the coverage of all skills, O($nkm$) constraints are required.
The second step of the algorithm requires O($Tnk$) time.

\begin{observation}
\label{obs:performance}
The performance of {\BestLoad} is at least as good as the performance of {\LoadGreedy} for the {\balanceloadcover} problem.
\end{observation}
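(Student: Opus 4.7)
The plan is to exploit the fact that \BestLoad{} considers \LoadGreedy{}'s output as one of its candidate solutions and then selects the best among all candidates with respect to $B(\mathcal{Q},\mathcal{J},\lambda)$. Since both algorithms share an identical first step (solving the LP relaxation of the ILP formulation to obtain the same fractional solution $\hat{X}$), the comparison reduces to the second step.

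First, I would make precise the family of intermediate solutions $\mathcal{Q}^{(1)},\mathcal{Q}^{(2)},\ldots,\mathcal{Q}^{(R)}$ generated by \BestLoad{}, where $\mathcal{Q}^{(r)}$ is the team assignment obtained after $r$ randomized rounding rounds driven by $\hat{X}$. The key structural observation is that the output of \LoadGreedy{} is, by construction, exactly $\mathcal{Q}^{(R)}$: both use the same sampling procedure and \LoadGreedy{} runs all $R=\ln(2T/\delta)$ rounds before stopping. In particular, the very same sequence of randomized decisions made inside \LoadGreedy{} can be taken to be the sequence used by \BestLoad{} (we may couple them on the same random bits), so $\mathcal{Q}^{(R)}$ coincides with $\mathcal{Q}^{\LoadGreedy}$.

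Next, I would invoke the definition of \BestLoad{}: it returns
\[
\mathcal{Q}^{\BestLoad} \;\in\; \arg\min_{r\in\{1,\ldots,R\}} B(\mathcal{Q}^{(r)},\mathcal{J},\lambda).
\]
Because $\mathcal{Q}^{(R)}$ is one of the candidates in this minimization, we immediately get
\[
B(\mathcal{Q}^{\BestLoad},\mathcal{J},\lambda) \;\leq\; B(\mathcal{Q}^{(R)},\mathcal{J},\lambda) \;=\; B(\mathcal{Q}^{\LoadGreedy},\mathcal{J},\lambda),
\]
which is the claimed inequality. I would note that this holds pointwise on the sample space under the coupling, hence also in expectation and with probability one in the deterministic-output sense used by \LoadGreedy{}.

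The only subtle point, and what I would spend the most care on, is the fact that \LoadGreedy{} is designed for \load{} (where complete coverage is required), whereas we evaluate it on the \balanceloadcover{} objective. I would address this by remarking that $B(\mathcal{Q},\mathcal{J},\lambda)$ is well-defined on any team assignment regardless of whether coverage is complete, so plugging \LoadGreedy{}'s output into $B$ is unambiguous; the guarantees of \LoadGreedy{} from Anagnostopoulos \etal{} (complete coverage after $R$ rounds with high probability) are not needed for the argument, they only ensure that $C(\mathcal{Q}^{(R)},\mathcal{J})=0$ and thus that the candidate corresponding to \LoadGreedy{} is competitive for small $\lambda$. No hard combinatorial obstacle arises: the claim is essentially a ``min over a superset'' monotonicity statement, and the only care needed is in aligning the random choices of the two procedures so that $\mathcal{Q}^{\LoadGreedy}$ genuinely appears as a candidate of \BestLoad{}.
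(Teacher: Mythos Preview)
Your proposal is correct and follows the same approach as the paper: the paper's justification is simply that one of the solutions considered by {\BestLoad} is the one returned by {\LoadGreedy}, which is exactly your ``min over a superset'' argument. You have written a more careful version (making the coupling of random choices explicit and defining the intermediate solutions $\mathcal{Q}^{(r)}$), but the underlying idea is identical.
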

Clearly, Observation \ref{obs:performance} holds since one of the solutions considered by {\BestLoad} is the one returned by {\LoadGreedy}.

\mpara{Improving the running time of {\ExpertGreedy} and {\ProjectTopExpertGreedy}:} 
For any value of the trade-off parameter $\lambda$, continually adding more workload to the experts will increase the value of $B(\mathcal{Q},\mathcal{J},\lambda)$ in two cases: 
(i) when all task requirements have been covered, and (ii) when the benefit from decreasing the incompleteness cost is significantly smaller than the cost of increasing the maximum load.
In these two cases, we expect the first part of the objective function to grow, while the second part remains approximately constant. 
This observation allows us to improve the runtime complexity of {\ExpertGreedy} and {\ProjectTopExpertGreedy} by setting a maximum possible value for $\ell$, namely $\ell_{\textrm{max}}$, with $\ell_{\textrm{max}}<|\mathcal{J}|$.
The appropriate selection of $\ell_{\textrm{max}}$, is a trade-off between the running time, and the quality of the results.

\mpara{Solving the {\balanceloadrequiredcover} problem:} 
Here, we present how we can extend the above algorithms to solve the {\balanceloadrequiredcover} problem.
This extension is based on a pre-processing stage that accounts for the required skills that need to be covered.

Solving the {\balanceloadrequiredcover} problem is essentially the same as adding a \emph{preprocessing step} to the 
algorithms discussed in the previous paragraph. This preprocessing step makes sure that \emph{all} required skills
from all tasks are covered, with a relatively small maximum load among the experts.  More specifically, in this step, we deploy the {\LoadGreedy} algorithm proposed in~\cite{anagnostopoulos2010power} with inputs the set of experts $\mathcal{P}$ and the set of tasks  $\mathcal{J}^{r}$.
Then, we remove from each task those skills that are covered by the corresponding team members -- in this way we remove all required skills and some of the optional ones that are now covered.
On this new input, we run the algorithms we designed to solve {\balanceloadcover}.

The running time of the preprocessing step is dominated by the method used to solve the linear programming relaxation of the algorithm {\LoadGreedy}.
As described above the state-of-the-art LP solvers require time polynomial in the number of constraints~\cite{gondzio1994computational,spielman2004smoothed}.

\section{Experiments}
\label{sec:exp}
\newcolumntype{P}[1]{>{\centering\arraybackslash}p{#1}}
This section explores the practicality of our algorithms using data from three major online labor markets. 
Specifically, 
$(i)$ we evaluate and compare the performances of our three methods, {\ExpertGreedy}, {\ProjectTopExpertGreedy} and {\BestLoad}, to multiple baselines for the {\balanceloadcover} and {\balanceloadrequiredcover} problems, 
$(ii)$ we showcase the impact of the trade-off parameter $\lambda$ on the load and incompleteness cost of the solution,
$(iii)$ we provide a running time analysis of our algorithms.

For all our experiments we use a single process implementation of our algorithms on a 64-bit MacBook Pro with an Intel Core i7 CPU at 2.6GHz and 16 GB RAM.
We use the Gurobi optimizer \cite{optimization2014inc} for linear programming.
We make the code, the datasets and the chosen parameters available online \footnote{https://www.dropbox.com/sh/8zpsi1etvvvvj5k/AACU5GEJwUibO1P\_UtOPkf5Ha?dl=0}.
\subsection{Datasets}
We use data from the online labor marketplaces, {\texttt{freelancer.com}}, {\texttt{guru.com}}, and {\texttt{upwork.com}}. We refer to these datasets as {\FreelanceDataset}, {\GuruDataset}, and {\UpworkDataset}, respectively. 

Table \ref{tbl:datasets} exhibits statistics on the different sizes and skill properties of these datasets. 
\begin{table}%[t!]
\small
	\centering
	\begin{tabular}{lP{1.2cm}lP{1.2cm}lP{1.2cm}lP{1.2cm}lP{1.2cm}|}
		\toprule
		\belowrulesepcolor{lightgray}
		\rowcolor{lightgray} Dataset & {\FreelanceDataset} & {\GuruDataset} & {\UpworkDataset}\\
		\aboverulesepcolor{lightgray}
		\midrule
		\# experts & $1212$ & $6120$  & $1500$\\
		\# tasks & $993$ & $3195$ & $3000$\\ \hline
		\# avg. skills/expert & 1.46 & 13.07 & 6.2\\
		\# avg. skills/task & 2.86 & 5.24 & 39.9\\
		\bottomrule
	\end{tabular}	
	\caption{A summary of the dataset statistics. 
		\label{tbl:datasets}}
\end{table}

In all datasets, skills acquired by experts that are never required by any task have been removed, since these are never used.
Note that, {\FreelanceDataset} (1212 experts, 993 tasks) and {\GuruDataset} (6120 experts, 3195 tasks) have more experts available than posted tasks, while the reverse is true for {\UpworkDataset} (1500 experts, 3000 tasks).
An interesting observation is that the ratio of expert skills to task skills is different in each of the three datasets.

\spara{Task skills:}
The {\FreelanceDataset} and {\GuruDataset} datasets include a random sample from a large pool of real tasks posted by users in these marketplaces.  
The {\UpworkDataset} dataset is a synthetic dataset obtained through a data-generation procedure similar to that used in the past \cite{anagnostopoulos2012online}; a small number of experts ($10\%$) is removed from the pool of experts in the dataset, and then subsets of their skills are repeatedly sampled to create tasks, by interpreting the union of their skills as task requirements.

\spara{Expert skills:}
All expert datasets used in this work are acquired from anonymized profiles of members registered in the three marketplaces. 
A profile itself includes a self-defined set of skills. 

\subsection{Baseline algorithms}
We compare the performance of our algorithms to the following baselines:

\spara{{\SetCoverGreedy}:}
A simple variation of the well-known greedy algorithm for {\setcover} \cite{vazirani2013approximation};
for each task, the algorithm iteratively assigns to each team the
expert whose skills overlap the most with the uncovered skills of the task and then removes these skills from the task.
The algorithm stops, either when all skills have been covered, or when none of the experts overlap with the remaining uncovered skills. 
The running time of this algorithm is O($knm^{2}$).

\spara{{\BestCostGreedy}:}
This is a variant of {\SetCoverGreedy} that takes into account the workload.
The difference is that instead of selecting the expert overlapping the most with the task, {\BestCostGreedy} assigns the expert that improves the objective function the most.
The algorithm stops when the cost cannot be further decreased. 
\iffalse
Note, that to find the best expert for a task in each step, it is sufficient to keep a variable storing the current maximum workload, and to compute the coverage score only for the considered task, as the uncovered skills of the other tasks remain constant. 
\fi
The running time of this algorithm is O($knm^{2}$). 

\spara{{\PairGreedy}:}
{\PairGreedy} is another intuitive greedy algorithm, which finds in each iteration the (task, expert) pair that improves the objective the most, and assigns the expert to the corresponding team. 
The drawback of this baseline is its runtime complexity, which is $O\left(k^{3}n\left(n + m\right)\right)$, thus prohibiting us from evaluating it on  real datasets.
As such, we do not report its performance.
Nevertheless, even when tested on smaller datasets, its performance is always outperformed by the proposed algorithms.

\subsection{Performance evaluation for {\balanceloadcover}}
\label{sec:perfeval}
This section demonstrates the performance of the proposed algorithms compared to the baselines, for the {\balanceloadcover} problem. 

In these experiments, we vary the trade-off parameter $\lambda$ to take values in $\{0,2,\ldots,10\}$.
We select this specific range of $\lambda$ values because it makes the impact of the trade-off parameter clear application-wise.
However, we also show the performance of our algorithms for values of $\lambda$ for which we showed that {\balanceloadcover} is NP-hard.
Furthermore, we set the parameter $\ell_{\textrm{max}}$ (the maximum number of $\ell$ iterations) to $80$ for all experiments, because we saw that in real applications it generally leads to reasonable solutions and runtime performances.

We present the results for {\balanceloadcover} for all three datasets in Figure~\ref{fig:costs}.
The $y$-axis represents the team-assignment cost ($B$) of each algorithm, and the $x$-axis 
corresponds to the value of the trade-off parameter $\lambda$.
Smaller values of cost correspond to a better solution.

We observe that the performances of our algorithms and the baselines follow a similar trend, which is consistent among the different datasets.
Furthermore, the baseline algorithms are clearly outperformed by our proposed approaches, with {\SetCoverGreedy} performing the worst --
the only exception is for $\lambda=0$, i.e., the case that completely ignores the load of the experts.
This is because {\SetCoverGreedy} always returns a team that covers all the task requirements and ignores the load of the experts.
{\BestCostGreedy} is also outperformed by our proposed algorithms.
Note that for $\lambda=0$, {\BestCostGreedy} is able to find solutions where no requirement is left uncovered.
However, as $\lambda$ increases, the algorithm continues covering all of the task requirements without decreasing the workload, which leads to the linear increase of the total cost.
The only exception is for the dataset {\UpworkDataset}, Figure \ref{fig:costs}(c), where for $\lambda=4$ the algorithm begins compromising incompleteness cost for less workload, but the total cost remains significantly larger compared to the other algorithms.
The performance of {\BestCostGreedy} is followed by {\BestLoad}.
One observation is that {\BestLoad} performed significantly better than the original algorithm {\LoadGreedy}.
We do not present these results because as explained in Observation \ref{obs:performance} {\BestLoad} always performs at least as good as {\LoadGreedy}.
Recall that {\LoadGreedy} finds a single solution that optimizes the workload, while it covers all the task requirements and its final solution is completely independent of the trade-off parameter $\lambda$.
Therefore, since the load of the solution is constant and the coverage is 0 the team-assignment cost increases linearly with the coefficient $\lambda$.

Now, we illustrate the performance of our algorithms for values of $\lambda$ for which we showed that the {\balanceloadcover} problem is NP-hard.
The corresponding performances for the three datasets can be seen in Figure~\ref{fig:costs} as subplots.
As expected, the closer $\lambda$ is to 0 the closer the algorithmic performances are, but as $\lambda$ increases the difference in the performance of our proposed algorithms and the baselines also increases.
Overall, we observe that our algorithms, namely {\ExpertGreedy}, {\ProjectTopExpertGreedy} and {\BestLoad}, outperform the baseline algorithms and we discuss their individual trade-off and efficiency differences below.

\subsection{Performance evaluation for {\balanceloadrequiredcover}}
We perform another set of experiments to demonstrate the performances of the algorithms for the {\balanceloadrequiredcover} problem.
In these experiments, we vary the fraction of required skills in the tasks as follows; with probability $p_s$ we independently define each skill of every task to be a required skill otherwise it is considered optional.
In Figure \ref{fig:probs}, we study how the algorithms and baselines perform for a range of $p_s$ values and a fixed $\lambda=4$.
We see that the observations for the algorithmic comparisons are similar to the ones made in the previous experiment for the {\balanceloadcover} problem.
Note that for $p_s$=0 no skill is required and therefore the algorithms perform exactly as in the {\balanceloadcover} problem, while for $p_s$=1 all skills are required and the performance of all algorithms is the same and equal to the result of the pre-processing stage.
  
\begin{figure*}[t!]
 \centering
\begin{minipage}[t]{0.32\linewidth}
\centering
    \includegraphics[width=\linewidth]{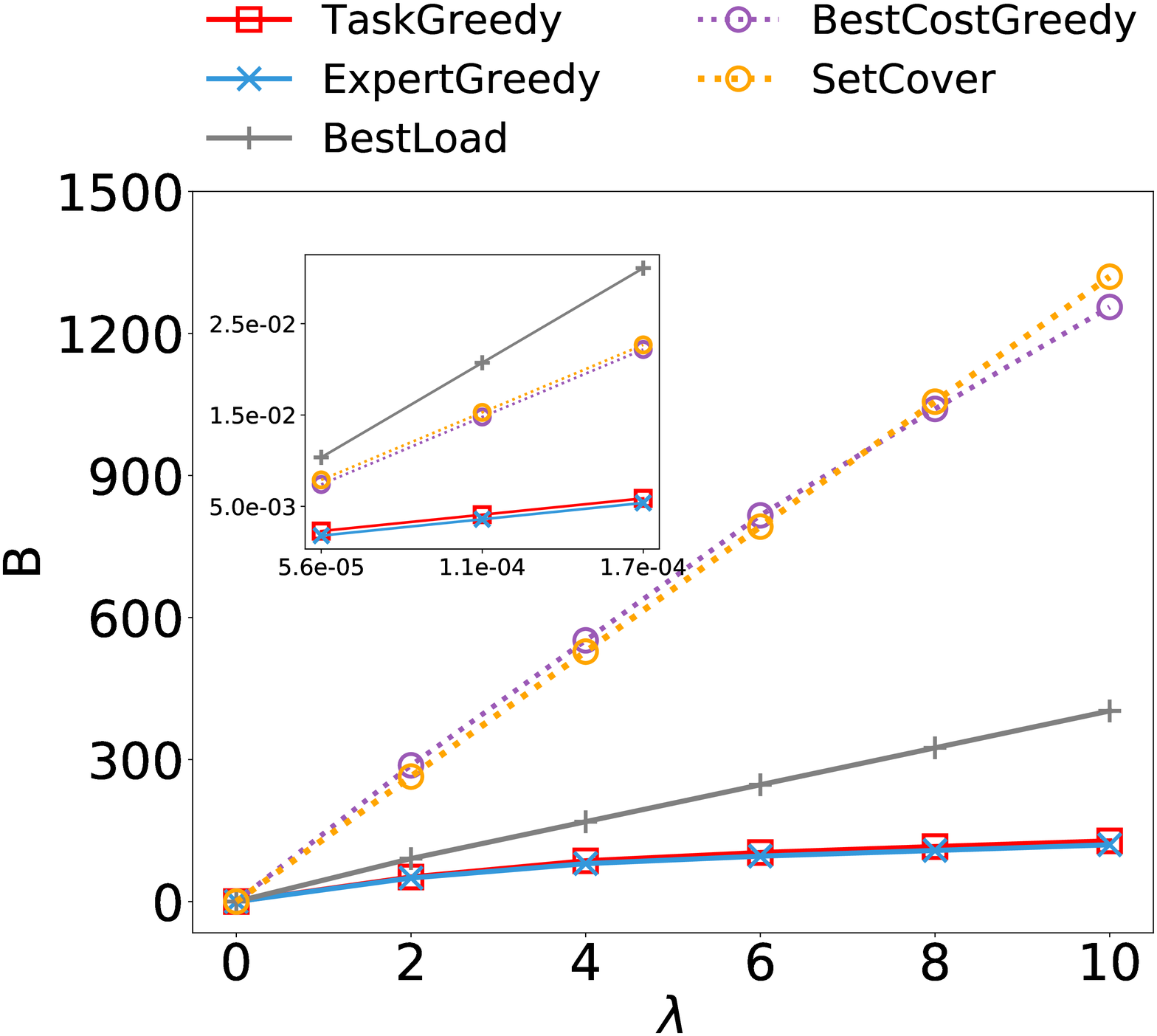}
    \small{(a)}
\end{minipage}%
    \hfill%
\begin{minipage}[t]{0.32\linewidth}
\centering
    \includegraphics[width=\linewidth]{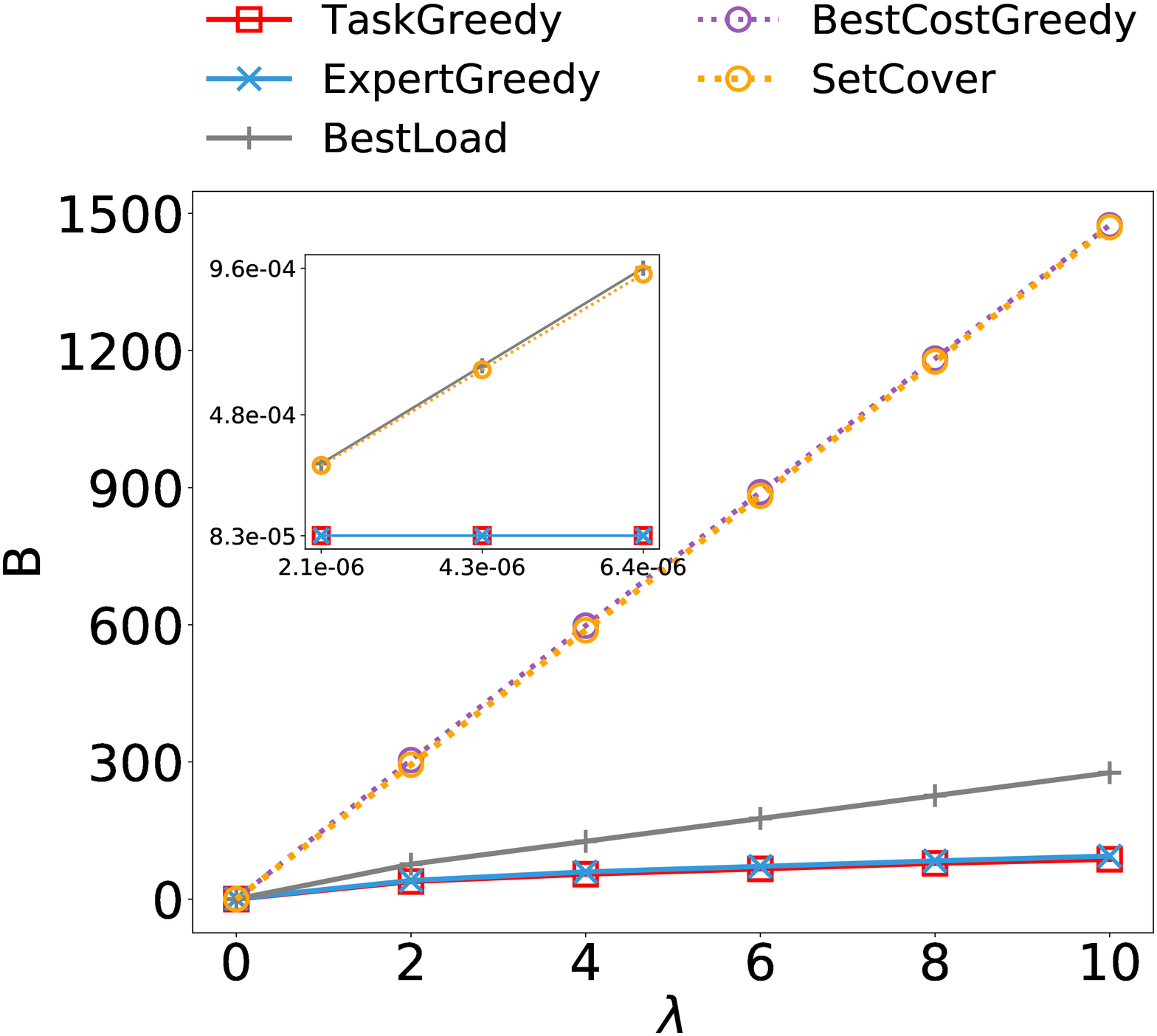}
    \small{(b)}
\end{minipage} 
\hfill%
\begin{minipage}[t]{0.32\linewidth}
\centering
    \includegraphics[width=\linewidth]{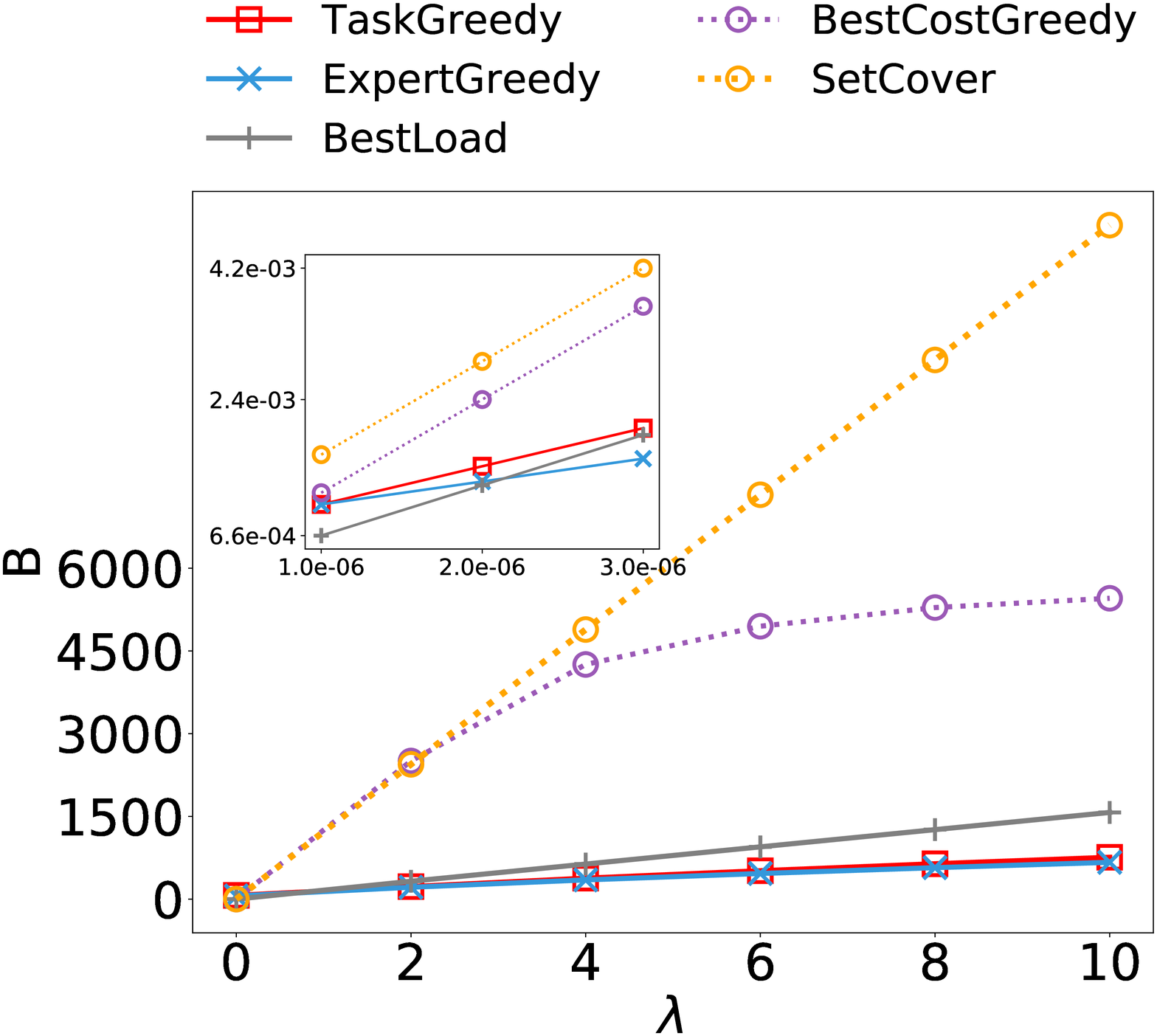}
    \small{(c)}
\end{minipage}
 \caption{Team-assignment cost ($B$) of algorithms and baselines for values of $\lambda=\{0,2,\ldots,10\}$ and $\ell_{\textrm{max}}=80$. The subplots correspond to values of $\lambda$ for which {\balanceloadcover} is NP-hard. Columns correspond to different datasets: (a) {\FreelanceDataset}; (b) {\GuruDataset}; (c) {\UpworkDataset}}.
 \label{fig:costs}
 
\end{figure*}

\begin{figure*}[t!]
 \centering
\begin{minipage}[t]{0.33\linewidth}
\centering
    \includegraphics[width=\linewidth]{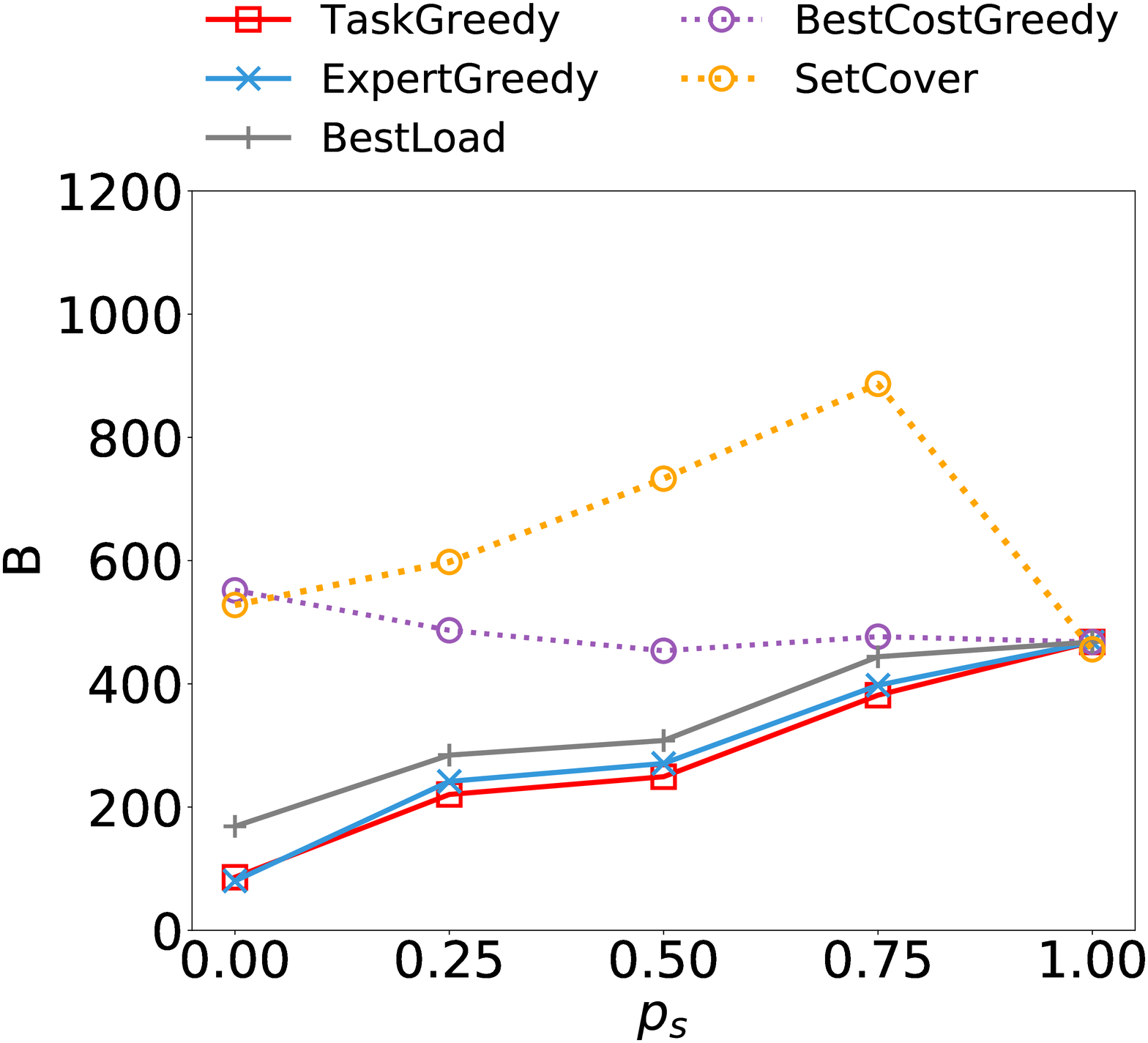}
    \small{(a)}
\end{minipage}%
    \hfill%
\begin{minipage}[t]{0.33\linewidth}
\centering
    \includegraphics[width=\linewidth]{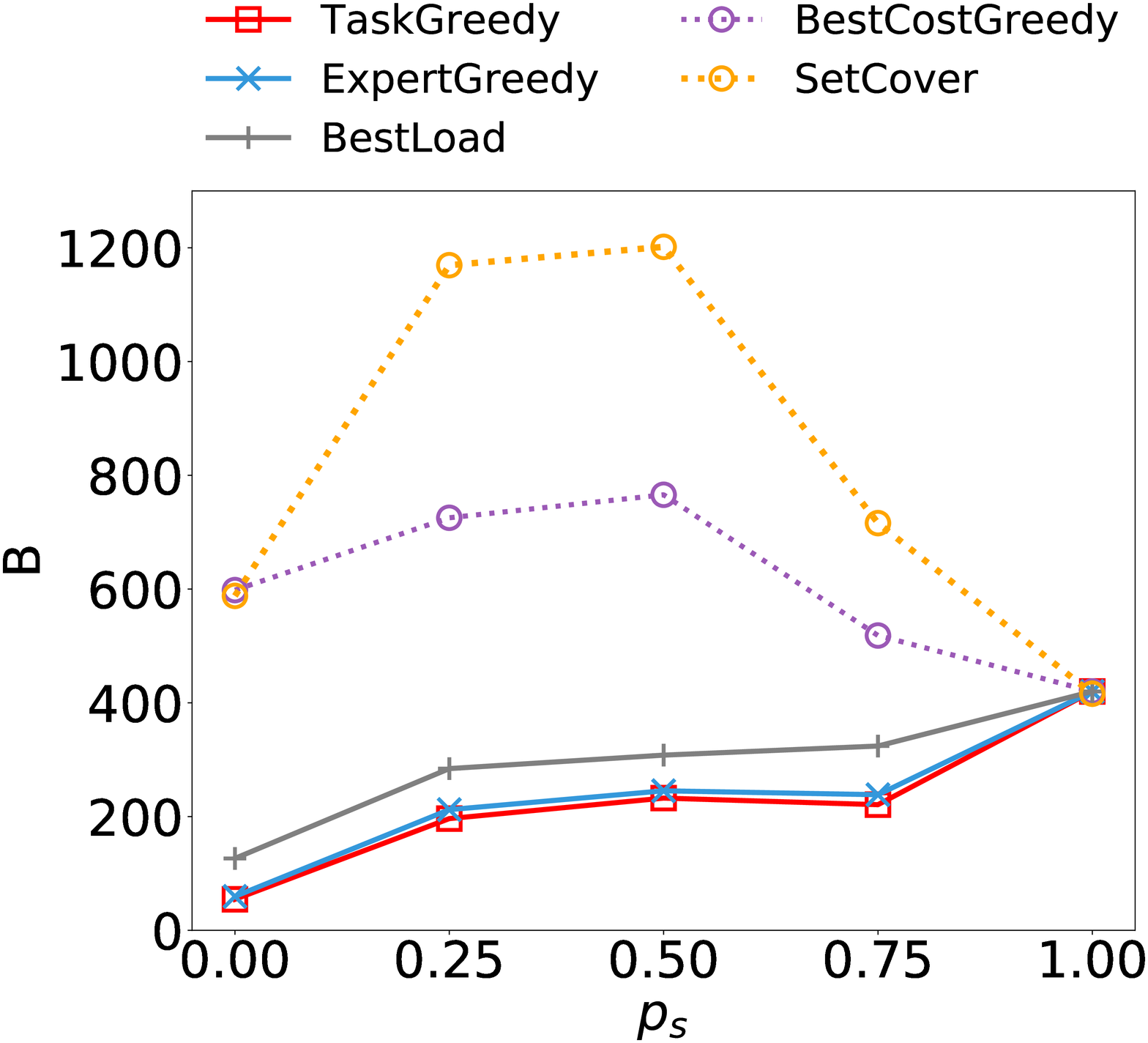}
    \small{(b)}
\end{minipage} 
\hfill%
\begin{minipage}[t]{0.33\linewidth}
\centering
    \includegraphics[width=\linewidth]{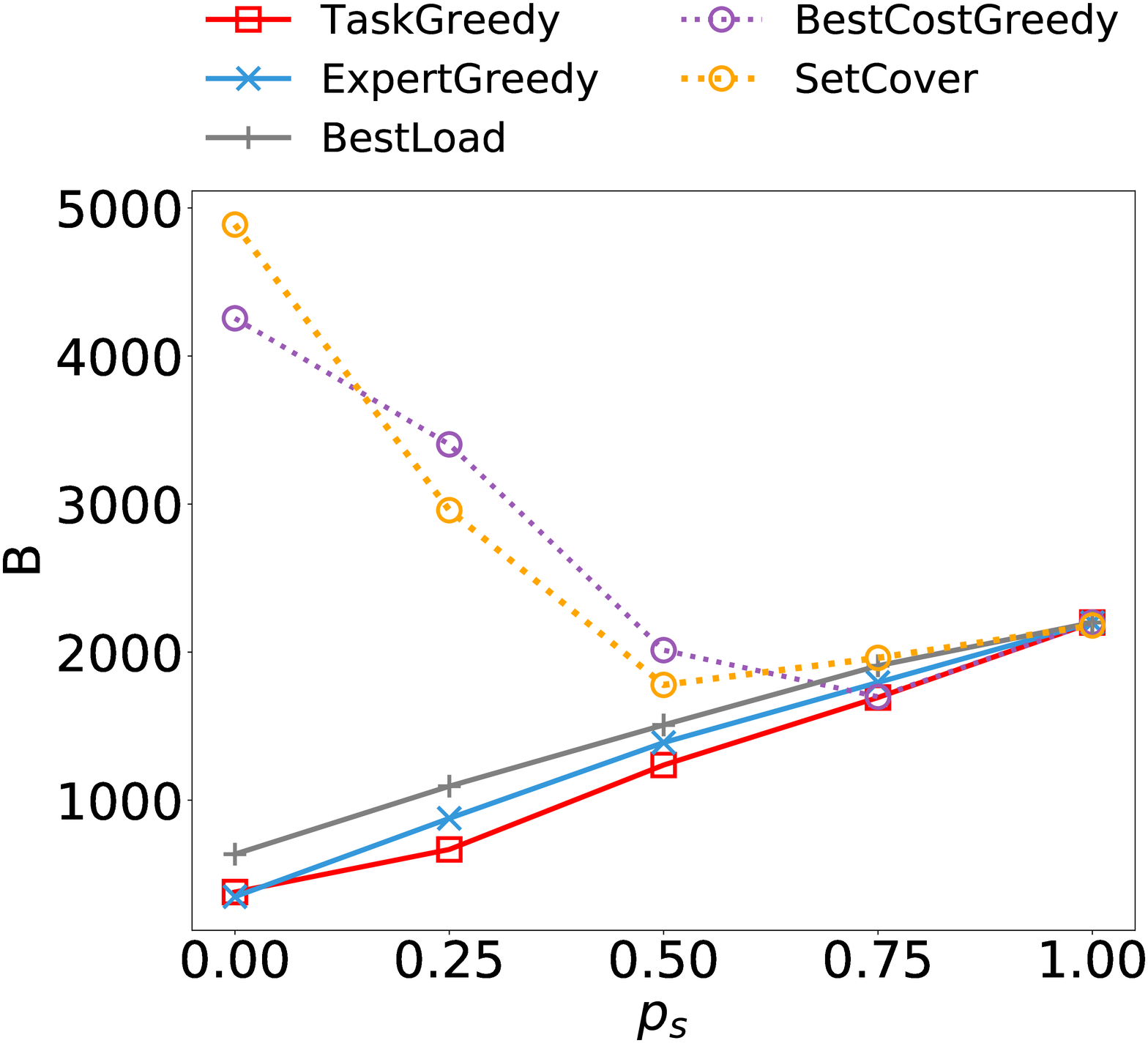}
    \small{(c)}
\end{minipage}
 \caption{Team-assignment cost ($B$) of algorithms and baselines for values of $p_s$=$\{0,0.25,0.5,0.75,1\}$, $\lambda = 4$ and $\ell_{\textrm{max}}=80$. Columns correspond to different datasets: (a) {\FreelanceDataset}; (b) {\GuruDataset}; (c) {\UpworkDataset}}
 \label{fig:probs} 
\end{figure*}

\subsection{Trade-off parameter $\lambda$}
We study the behavior of our proposed algorithms for different load and incompleteness cost trade-off values.
We begin by setting $\lambda=0$, i.e., we ignore the workload and ensure complete coverage (incompleteness cost is 0), and increase $\lambda$ to observe how the trade-off between load and incompleteness cost changes.
The results are shown in Figure~\ref{fig:tradeoffs}.
The $y$-axis shows the load cost, and the $x$-axis the incompleteness cost for the specific load.

As expected, for $\lambda$ close to $0$, our algorithms yield solutions with low incompleteness cost and high workload, while increasing $\lambda$ changes this balance accordingly.
Note that {\BestLoad} lacks trade-off capabilities, compared to {\ExpertGreedy} and {\ProjectTopExpertGreedy}.
This is because the first step of the algorithm, which creates the optimal fractional solution for the {\load} problem \cite{anagnostopoulos2010power}, is oblivious
to the parameter $\lambda$.
Thus, even though the second step of the algorithm weighs the trade-off parameter $\lambda$ the trade-off capabilities are restricted by the assignment probabilities created in the first step.
Therefore, what we see in Figure~\ref{fig:tradeoffs} is that for our datasets and the examined range of $\lambda$ the values of load and incompleteness achieved by the algorithm are the same except from the solution for $\lambda=0$.
A quality of {\ExpertGreedy} and {\ProjectTopExpertGreedy} is that improving the cost in one of the two components is achieved by paying a moderate price for the other component.
For instance, assume a customer using {\texttt{guru.com}} that considers task requirements to be important.
By observing Figure \ref{fig:tradeoffs}(a) we can set $\lambda=2$ to create teams that would satisfy both,
the customer and the experts, as for a reasonable maximum load $\sim20$ {\ExpertGreedy} and {\ProjectTopExpertGreedy} induce very small incompleteness cost $\sim5$.
Now, if another customer prefers hiring few people at the cost of incompleteness, we can set $\lambda=4$ to achieve load $\sim15$ for an incompleteness cost of $\sim30$, thus weighing differently the two components, yet always reasonably.
\iffalse
Note that in reality to ignore coverage and optimizing only for the workload, requires setting $\lambda$ to be significantly larger than the number of tasks $k$ ($\lambda >> k$) since the largest value that coverage $C(\mathcal{Q})$ can take is $k$.
\fi
\begin{figure*}[htbp]
 \centering
\begin{minipage}[t]{0.32\linewidth}
\centering
    \includegraphics[width=\linewidth]{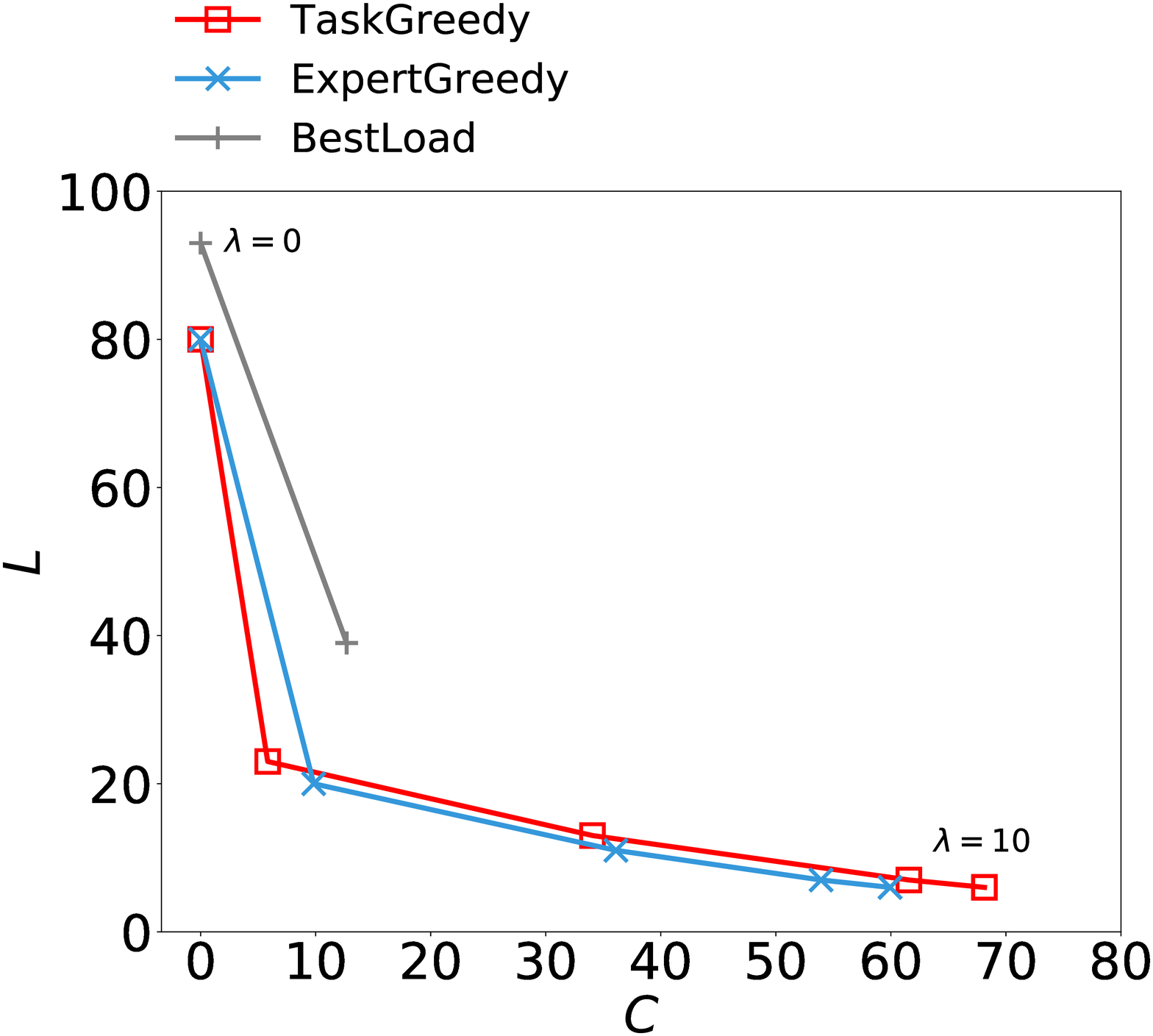}
    \small{(a)}
    \label{fig:tradeoff_fr}
\end{minipage}%
    \hfill%
\begin{minipage}[t]{0.32\linewidth}
\centering
    \includegraphics[width=\linewidth]{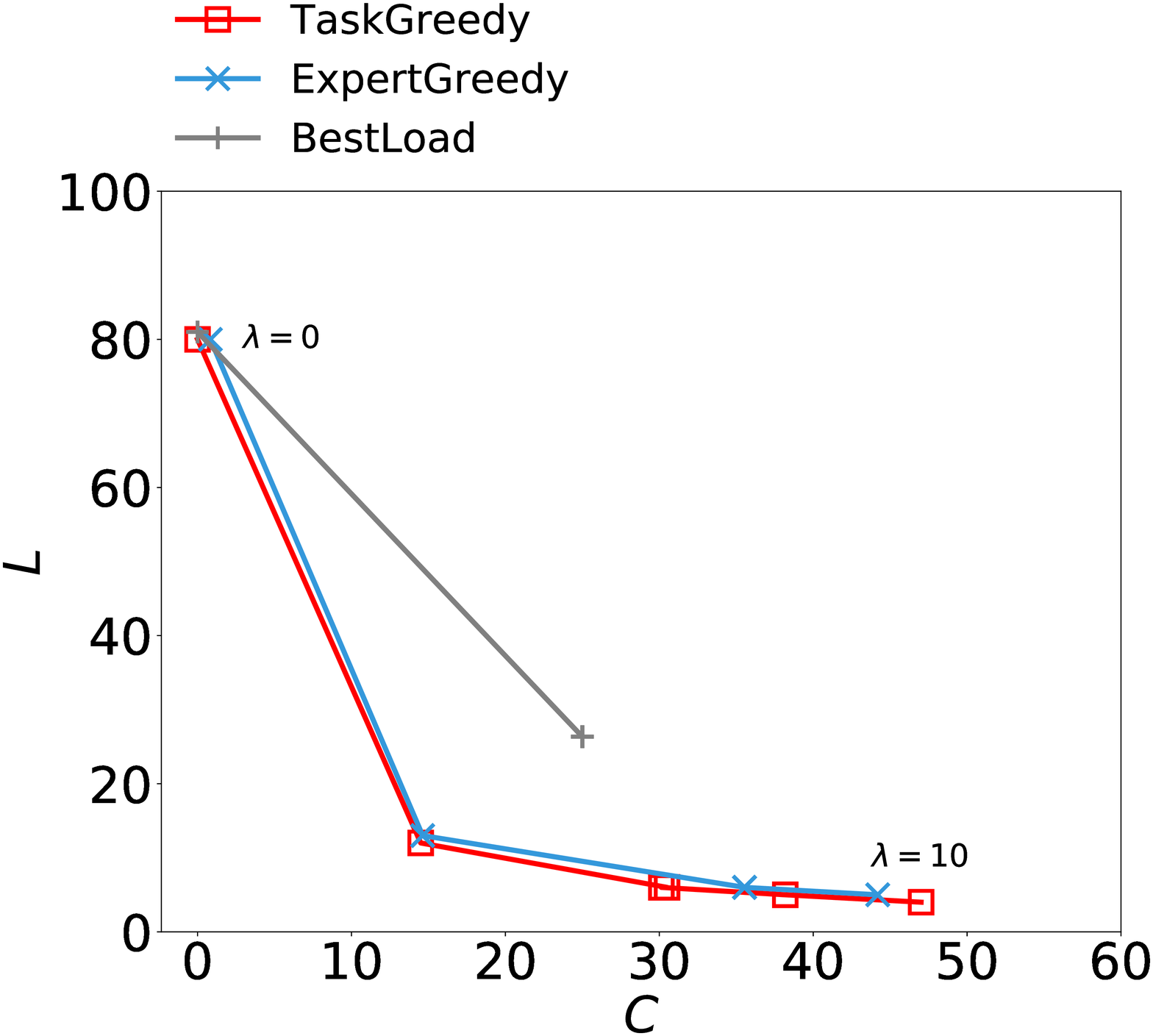}
    \small{(b)}
    \label{fig:tradeoff_gr}
\end{minipage} 
\hfill%
\begin{minipage}[t]{0.32\linewidth}
\centering
    \includegraphics[width=\linewidth]{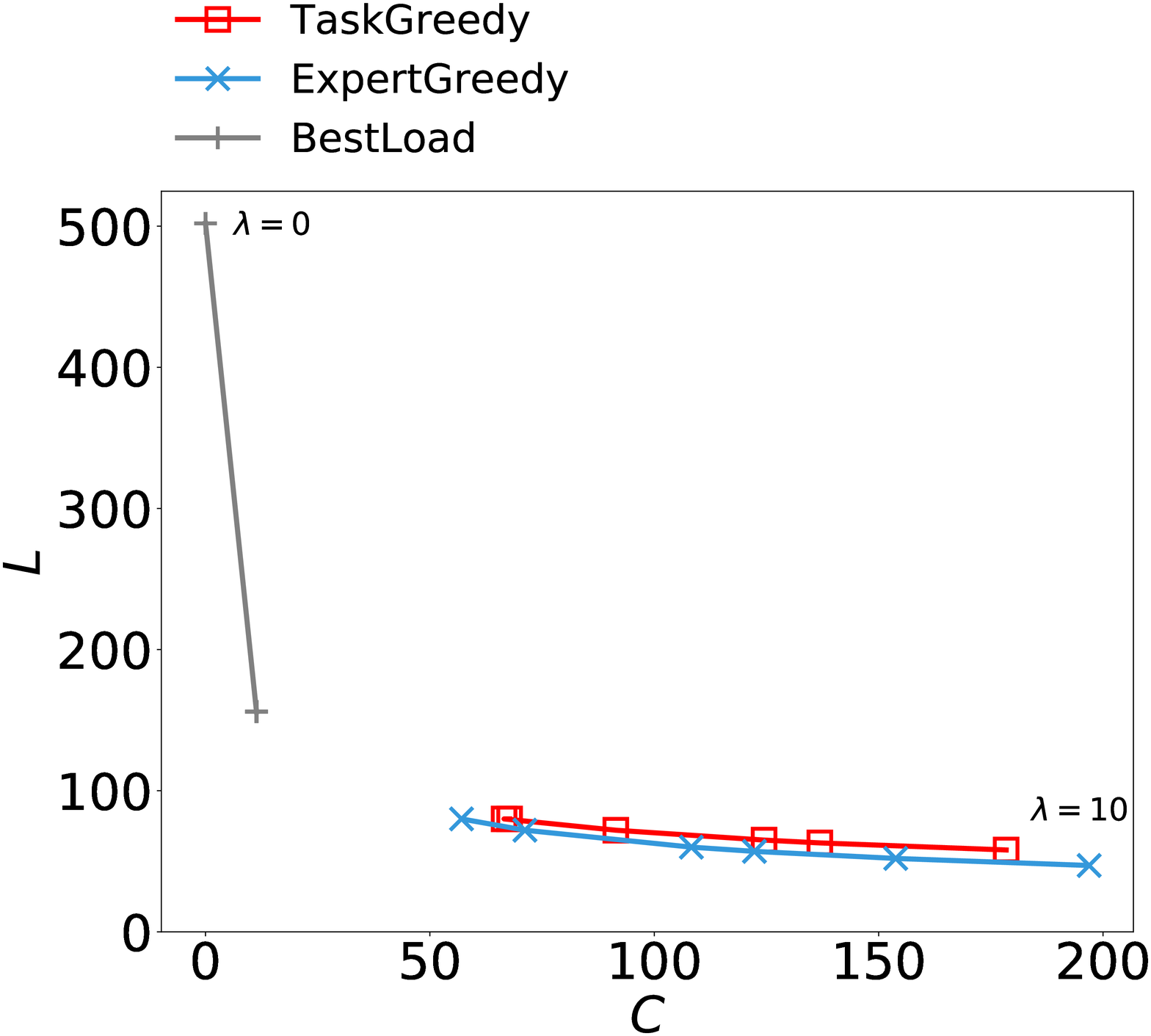}
    \small{(c)}
    \label{fig:tradeoff_up}
\end{minipage} 
\caption{Trade-off between load cost ($L$) and incompleteness cost ($C$) for $\ell_{\textrm{max}}=80$. The labels next to the first and last data points correspond to $\lambda=0$ and $\lambda=10$, respectively. The in-between points shown in the curve correspond to $\lambda=\{2,4,6,8\}$. Columns correspond to different datasets: (a) {\FreelanceDataset}; (b) {\GuruDataset}; (c) {\UpworkDataset}}
 \label{fig:tradeoffs}
\end{figure*}

Note, that the baseline algorithms are omitted from this plot. 
This is because {\BestCostGreedy}  always maintains the incompleteness cost very low, which requires workload that is much larger than the ones induced by the proposed algorithms.  
On the other hand, {\SetCoverGreedy} lacks trade-off capabilities, since its solution is always independent of $\lambda$ with 0 incompleteness cost and constant load.

Figure~\ref{fig:tradeoffs} allows us to further investigate the properties of our algorithms for {\balanceloadcover}.
A first observation is that all algorithms demonstrate a smooth transition on the load and incompleteness cost as the trade-off parameter changes. 
Note in Figures \ref{fig:tradeoffs}(a) and \ref{fig:tradeoffs}(b) that assigning a maximum workload of $80$ is enough to achieve complete task coverage.
In fact, for the same datasets, even if the load decreases to $\sim20$, the incompleteness cost increases by little.
However, this is not the case for Figure \ref{fig:tradeoffs}(c) (\UpworkDataset), where it is clear that the load should be more than $80$ ($\ell_{\textrm{max}}$) to reach complete coverage (0 incompleteness cost).
This occurs because, in the specific dataset, both the experts and the average number of skills acquired by them are significantly less than the tasks and the skills required by the tasks, respectively, which requires creating large teams and utilizing the same experts many times to achieve full coverage.
Even the baseline {\LoadGreedy} that guarantees complete coverage with minimum workload cost, needs minimum load 548 for the specific dataset to accomplish full coverage.

To showcase the differences of 
the algorithms as depicted in this experiment, we compare {\ProjectTopExpertGreedy} with  {\ExpertGreedy} and {\BestLoad}, for the {\UpworkDataset} dataset (Figure \ref{fig:tradeoffs}(c)).
Recall that the {\ProjectTopExpertGreedy} algorithm assigns experts to tasks, based on how suitable they are for the task individually, and not within a team.
Therefore, for datasets such as {\UpworkDataset}, where there are fewer experts and expert skills compared to tasks and task requirements, the algorithm becomes less effective as it cannot evaluate a newly-added person is the best option for the whole team.
However, in a dataset such as {\GuruDataset} (Figure \ref{fig:tradeoffs}(b)) where the experts acquire on average more and a larger variety of skills than the tasks require, we observe that {\ProjectTopExpertGreedy} performs slightly better, or the same compared to the other two algorithms.
This is because the skill ``surplus'', leaves to the algorithm
room for seemingly wrong local choices, as it will be able to compensate for them by using the skills of some other of the remaining experts.

\subsection{Runtime analysis}
Finally, we investigate the running time efficiency of our algorithms. 
Figure~\ref{fig:time} shows the average running times for all algorithms and datasets when setting the parameter $\lambda=4$.
The running time complexities of the algorithms are independent of $\lambda$ so its selection does not affect the running time results.
The times are averaged over $5$ runs for the {\balanceloadcover} problem -- the results for {\balanceloadrequiredcover} are similar and omitted.

\begin{figure}[t!]
 \centering
    \includegraphics[width=0.35\linewidth]{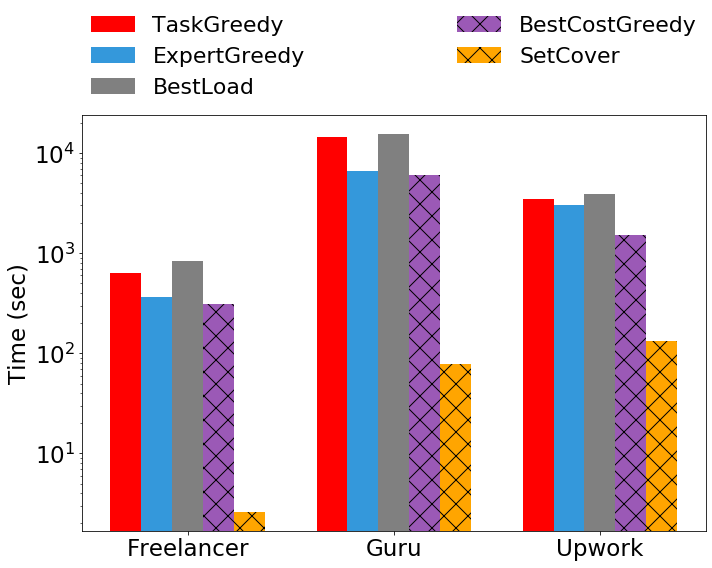}
    \label{fig:time_fr}
\caption{Average running time (sec) of algorithms and baselines over 5 runs, in logarithmic scale, for $\lambda = 4$, $\ell_{\textrm{max}}=80$. 
The three bar charts correspond to the datasets: {\FreelanceDataset}, {\GuruDataset}, {\UpworkDataset}}.
 \label{fig:time}
\end{figure}

We use the baselines {\SetCoverGreedy} and {\BestCostGreedy} as indicators of how well our algorithms perform in terms of running time because they have the best runtime complexity.
Even though their asymptotic complexity is the same, {\BestCostGreedy} is slower than {\SetCoverGreedy}.
This is because the two algorithms have different stopping criteria, the former depending on the improvement of the team-assignment cost, and the latter on the coverage of the skills.
Note that simply comparing the asymptotic running times of the different algorithms (see Section \ref{sec:algo}) is not sufficient.
In fact, there are multiple factors we need to consider, such as constants, dominating factors that depend on the properties of the datasets, efficient implementations, etc.

In Figure \ref{fig:time} datasets {\FreelanceDataset} and {\GuruDataset} show that {\ExpertGreedy} is much faster than {\ProjectTopExpertGreedy} and {\BestLoad} for datasets $k < n$ (the $y$-axis is in logarithmic scale).
Yet, for the dataset {\UpworkDataset}, where $n < k$, we see that even though {\ExpertGreedy} remains the fastest algorithm, the running time of {\ProjectTopExpertGreedy} is also very close.
One possible explanation is that having fewer experts than tasks, with fewer skills on average allows {\ProjectTopExpertGreedy} to find teams faster in this dataset; yet {\ProjectTopExpertGreedy} is consistently slower than {\ExpertGreedy} for all datasets.
Thus, we can  conclude that overall {\ExpertGreedy} is the most efficient of our algorithms.

\section{Conclusion}
\label {sec:concl}
In this paper, we introduced {\balanceloadcover}, a team-formation problem where given a collection of tasks and a pool of experts, the goal is to form teams such that each team is associated with a task and it covers it as well as possible, while at the same time, the maximum workload of the chosen experts is also minimized.
We also considered a variant of this problem where each task has some set of required skills that are required to be covered by the formed teams.
To the best of our knowledge, we are the first to combine the coverage of tasks and the workload of experts into a single objective.
We showed that our problems are NP-hard and designed efficient heuristics for solving them. 
Our experiments with three real-world datasets from online labor markets demonstrate the efficiency and the efficacy of our algorithms, and their superiority compared to other heuristics. 

\clearpage
\bibliographystyle{abbrv}
\bibliography{references}

\end{document}